\def\oneb{{\bf 1}}
\def\1{{\bf 1}}
\def\i{{\bf i}}
\def\A{\mathcal{A}}
\def\C{\mathcal{C}}
\def\E{\mathcal{E}}
\def\I{\mathcal{I}}
\def\M{\mathcal{M}}
\def\N{\mathcal{N}}
\def\S{\mathcal{S}}
\def\T{\mathcal{T}}
\def\V{\mathcal{V}}
\def\Fbb{{\mathbb F}}
\def\R{{\mathbb R}}
\def\Zbb{{\mathbb Z}}
\def\al{\alpha}
\def\d{\delta}
\def\D{\Delta}
\def\e{\epsilon}
\def\g{\gamma}
\def\l{\lambda}
\def\OM{\Omega}
\def\s{\sigma}
\def\th{\theta}
\def\ib{\bar{i}}
\def\jb{\bar{j}}
\def\Xb{\bar{X}}
\def\Yb{\bar{Y}}
\def\Zb{\bar{Z}}
\def\nai{n \ap \infty}
\def\ap{\rightarrow}
\def\es{\emptyset}
\def\seq{\subseteq}
\def\bi{\{0,1\}}
\def\bp{\{-1,1\}}
\def\fa{\; \forall}
\def\drc{(d_r,d_c)}
\def\st{\mbox{ s.t. }}
\def\nm{\Vert}
\renewcommand{\and}{\mbox{$\wedge$}}
\newcommand{\bc}{\begin{center}}
\newcommand{\ec}{\end{center}}
\newcommand{\be}{\begin{equation}}
\newcommand{\ee}{\end{equation}}
\newcommand{\bd}{\begin{displaymath}}
\newcommand{\ed}{\end{displaymath}}
\newcommand{\ba}{\begin{array}}
\newcommand{\ea}{\end{array}}
\newcommand{\ben}{\begin{enumerate}}
\newcommand{\een}{\end{enumerate}}
\newcommand{\bit}{\begin{itemize}}
\newcommand{\eit}{\end{itemize}}
\newcommand{\beq}{\begin{eqnarray}}
\newcommand{\eeq}{\end{eqnarray}}
\newcommand{\btab}{\begin{tabular}}
\newcommand{\etab}{\end{tabular}}
\newcommand{\bfig}{\begin{figure}}
\newcommand{\efig}{\end{figure}}
\newcommand{\btp}{\begin{tikzpicture}}
\newcommand{\etp}{\end{tikzpicture}}
\newcommand{\bcm}{\begin{comment}}
\newcommand{\ecm}{\end{comment}}
\newcommand{\argmin}{\operatornamewithlimits{argmin}}
\renewcommand{\mod}{~{\rm mod}~}
\newcommand{\nmN}[1]{ \nm #1 \nm_N }
\newcommand{\nmS}[1]{ \nm #1 \nm_S }
\newcommand{\IP}[2]{ \langle #1 , #2 \rangle }
\newcommand{\Leg}[2]{ \left( \frac{#1}{#2} \right) }
\newcommand{\rk}{{\rm{rank}}}
\newcommand{\supp}{{\rm{supp}}}
\def\Fq{\Fbb_q}
\def\Fqs{\Fbb_q^*}
\def\PGq{PGL(2,\Fq)}
\def\PSq{PSL(2,\Fq)}
\def\PScq{PSL^c(2,\Fq)}
\def\Rno{\R^{n_r \times n_c}}
\def\Xh{\hat{X}}
\def\nmsl1{\nm_{{\rm SL1}}}
\definecolor{verm}{rgb}{0.6,0.2,0.2}
\definecolor{purp}{rgb}{0.3,0.1,0.6}
\definecolor{purple}{rgb}{0.4,0.0,0.6}
\definecolor{bggreen}{rgb}{0.1,0.3,0.1}
\definecolor{dgreen}{rgb}{0.1,0.6,0.1}
\definecolor{black}{rgb}{0.0,0.0,0.0}
\definecolor{crim}{rgb}{0.3,0.1,0.1}
\definecolor{dred}{rgb}{0.5,0.1,0.1}
\definecolor{Blue}{cmyk}{0.65,0.13,0,0}
\definecolor{Black}{cmyk}{0,0,0,1}
\definecolor{Red}{cmyk}{0,1,1,0}
\definecolor{Green}{cmyk}{1,0,1,0}
\definecolor{Orange}{cmyk}{0,0.61,0.87,0.1}
\definecolor{Fuchsia}{cmyk}{0.47,0.91,0,0.08}
\definecolor{PineGreen}{cmyk}{0.92,0,0.59,0.25}
\newtheorem{corollary}{Corollary}
\newtheorem{definition}{Definition}
\newtheorem{lemma}{Lemma}
\newtheorem{theorem}{Theorem}
\begin{document}

\title{New and Explicit Constructions of Unbalanced Ramanujan Bipartite Graphs}

\author{Shantanu Prasad Burnwal and Kaneenika Sinha and M.\ Vidyasagar
\thanks{SPB is pursuing his Ph.D.\ in Electrical Engineering at
the Indian Institute of Technology Hyderabad; 
email: ee16resch11019@iith.ac.in.
KS is with the Department of Mathematics, Indian Institute of Science,
Education, and Research (IISER), Pune; email: kaneenika@iiserpune.ac.in.
MV is with the Department of Electrical Engineering at
the Indian Institute of Technology Hyderabad; email: M.Vidyasagar@iith.ac.in.
The first author was supported by a Ministry of Human Reseource
Development (MHRD), Government of India.
The second author was supported by a MATRICS grant from the
Science and Engineering Research Board (SERB), Department of Science
and Technology, Government of India.
The third author was supported by a SERB National Science Chair,
Government of India.}}

%    General info
% \subjclass[2010]{Primary 05C50, 05C75; Secondary 05C31, 68R10}

% \textbf{Key words:} Ramanujan graphs, computational complexity

\date{\today}

\maketitle

\begin{abstract}
The objectives of this article are three-fold.  Firstly, we present for the first time explicit constructions of an infinite family of \textit{unbalanced} Ramanujan bigraphs.  Secondly, we revisit some of the known methods for constructing Ramanujan graphs and discuss the computational work required in actually implementing the various construction methods.  The third goal of this article is to address the following question: can we construct a bipartite Ramanujan graph with specified degrees, but with the restriction that the edge set of this graph must be distinct from a given set of ``prohibited" edges?  We provide an affirmative answer in many cases, as long as the set of prohibited edges is not too large.
\end{abstract}

% \keywords{Ramanujan graphs, Explicit construction, computational complexity}

\noindent \textbf{MSC Codes:} Primary 05C50, 05C75; Secondary 05C31, 68R10

\section{Introduction}\label{sec:Intro}

A fundamental theme in graph theory is the study of the spectral gap
of a regular (undirected) graph, that is, the difference between the
two largest eigenvalues of the adjacency matrix of such a graph.
Analogously, the spectral gap of a bipartite, biregular graph is the
difference between the two largest singular values of its biadjacency
matrix (please see Section \ref{sec:Review} for detailed definitions). 
Ramanujan graphs (respectively Ramanujan bigraphs) are graphs with an 
optimal spectral gap. 
Explicit constructions of such graphs have multifaceted applications in
areas such as computer science, coding theory and compressed sensing. 
In particular, in \cite{Shantanu-TSP20} the first and third authors show
that Ramanujan graphs can provide the first \textit{deterministic} solution
to the so-called matrix completion problem.
Prior to the publication of \cite{Shantanu-TSP20}, the matrix completion
problem had only a probabilistic solution.
% The application of explicitly constructed Ramanujan graphs and (unbalanced)
% Ramanujan bigraphs to the matrix completion problem in the area of
% compressed sensing has been systematically investigated in
% \cite{Shantanu-TSP20}. 
The explicit construction of Ramanujan graphs has been classically well studied.
Some explicit methods are known, for example,  by the work of Lubotzky,
Phillips and Sarnak \cite{Lubotzky-et-al88}, Li \cite{Li92}, Morgenstern
\cite{Morgenstern94}, Gunnells \cite{Gunnells05}, Bibak, Kapron and
Srinivasan \cite{Bibak-et-al16}. 
These methods draw from concepts in linear algebra, number theory,
representation theory and the theory of automorphic forms. 
In contrast, however, no explicit methods for constructing 
\textit{unbalanced} Ramanujan bigraphs are known. 
There are a couple of abstract constructions
in \cite{Ballantine-et-al15,Evra-Parz18}, but these are not
explicit.\footnote{There is however a paper under
preparation by C.\ Ballantine, S.\ Evra, B.\ Feigon, K.\ Maurischat,
and O.\ Parzanchevski that will present an explicit constrution.}

This article has three goals.
\begin{enumerate}
	\item {\bf [Explicit construction of unbalanced Ramanujan bigraphs]} First and foremost, we present for the first time explicit constructions of an infinite family of \textit{unbalanced} Ramanujan bigraphs.  Our construction, presented in Section \ref{sec:Rama-B}, is based on ``array code" matrices from LDPC (low density parity check) coding theory.  Apart from being the first explicit constructions, they also have an important computational feature: the biadjacency matrices are obtained \textit{immediately} upon specifying two parameters, a prime number $q$ and an integer $l \geq 2$.  

\item {\bf [Comparison of computational aspects of known constructions]} The second goal of this article is to revisit some of the earlier known constructions of Ramanujan graphs and (balanced) bigraphs and compare the amount of work involved in  constructing the various classes of graphs and obtaining their adjacency or biadjacency matrices.  We also focus on the two earlier-known constructions of Ramanujan bipartite graphs due to Lubotzky-Phillips-Sarnak \cite{Lubotzky-et-al88} and Gunnells \cite{Gunnells05} and show that each can be converted into a non-bipartite graph.  Note that every graph can be associated with a bipartite graph, but the converse is not true in general.  Also, the research community prefers non-bipartite graphs over bipartite graphs.  Thus, our proof that the LPS and Gunnells constructions can be converted to non-bipartite graphs is of some interest.  All of this is addressed in Section \ref{sec:Rama-G}.

\item {\bf [Construction of Ramanujan graphs with prohibited edges]}
The third goal of this article is to address the following question:
can we construct a bipartite Ramanujan graph with specified degrees,
but with the restriction that the edge set of this graph must be distinct
from a given set of ``prohibited'' edges? 
The approach that we follow to answer this question is to start with an
existing Ramanujan bigraph, and then to perturb its edge set
so as to eliminate the prohibited edges and replace them by other edges
that are non prohibited.
% in such a way that the biregularity and Ramanujan properties of the
% graph are not lost. 
This procedure retains the biregularity of the graph.
We then show that our replacement procedure also retains the Ramanujan
nature of the bigraph, provided the gap between the second largest
singular value of the biadjacency matrix and the ``Ramanujan bound''
is larger than twice the maximum number of prohibited edges at each vertex.
% While doing so, we also make observations about the computational relationship between the size of the second largest singular value and the amount of perturbation that a Ramanujan bigraph can withstand. 
These questions are studied in Section \ref{sec:Pert}.  

\end{enumerate}
\subsection{Organization of paper}
\label{sec: Organization}
This article is organized as follows.  In Section \ref{sec:Review}, we present a brief review of Ramanujan graphs and bigraphs.  

In Section \ref{sec:Rama-B}, we address the first goal of this article,
and present the first explicit construction of an infinite
 family of unbalanced Ramanujan bigraphs.  

In Sections \ref{sec:Rama-G} (Sections \ref{ssec:31} - \ref{ssec:34}), we address the second goal of this article. 
We review many of the known methods for constructing Ramanujan graphs, based on the original publications. 
% Along the way, we analyze the computational effort
% required in actually implementing the various construction methods. 
% A pertinent issue that is addressed here is whether one can give a polynomial-time algorithm for implementing the known constructions.

In Section \ref{sec:Further}, we shed further light on the two constructions of Ramanujan bipartite graphs due to Lubotzky-Phillips-Sarnak \cite{Lubotzky-et-al88} and Gunnells \cite{Gunnells05} and show how each can be converted into a non-bipartite graph. 
 
In Section \ref{sec:Asp}, we analyze the computational effort required in actually implementing the various construction methods reviewed in Section \ref{sec:Rama-G}.  A pertinent issue that is addressed here is whether one can give a polynomial-time algorithm for implementing the known constructions.
 In Section \ref{sec:Pert} (Sections \ref{ssec:71} and \ref{sec:72}), we accomplish the third goal of this article, namely to construct a bipartite Ramanujan graph with specified degrees, but with the restriction that the edge set of this graph must be disjoint from a given set of ``prohibited" edges.
In Section \ref{ssec:73}, we analyze our new construction of
Ramanujan bigraphs from Section \ref{sec:Rama-B} as well as the previously
known constructions of Ramanujan graphs and bigraphs,
in terms of 
how many edges can be relocated while retaining the
Ramanujan property.
% in terms of how much perturbation of edges these graphs can sustain without losing the biregularity as well as Ramanujan property.

\section{Review of Ramanujan Graphs and Bigraphs}\label{sec:Review}

% \subsection{Basics of Ramanujan Graphs and Bigraphs}\label{ssec:21}

In this subsection we review the basics of Ramanujan graphs and
Ramanujan bigraphs.
Further details about Ramanujan graphs can be found in
\cite{Ram-Murty03,DSV03}.

Recall that a \textbf{graph} consists of a vertex set $\V$ and an edge set
$\E \seq \V \times \V$.
If $(v_i,v_j) \in \E$ implies that $(v_j,v_i) \in \E$, then the graph is
said to be \textbf{undirected}.
A graph is said to be \textbf{bipartite} if $\V$ can be partitioned into
two sets $\V_r, \V_c$ such that $\E \cap (\V_r \times \V_r) = \es$,
$\E \cap (\V_c \times \V_c) = \es$.
Thus, in a bipartite graph, all edges connect one vertex in $\V_r$ with
another vertex in $\V_c$.
A bipartite graph is said to be \textbf{balanced} if $|\V_r| = |\V_c|$,
and \textbf{unbalanced} otherwise.

A graph is said to be \textbf{$d$-regular} if every vertex has the same
degree $d$.
A bipartite graph is said to be \textbf{$\drc$-biregular}
if every vertex in $\V_r$
has degree $d_r$, and every vertex in $\V_c$ has degree $d_c$.
Clearly this implies that $d_c |\V_c| = d_r |\V_r|$.

Suppose $(\V,\E)$ is a graph.
Then its adjacency matrix $A \in \bi^{|\V| \times |\V|}$ is defined by
setting $A_{ij} = 1$ if there is an edge $(v_i,v_j) \in \E$, and
$A_{ij} = 0$ otherwise.
In an undirected graph (which are the only kind we deal with in the paper),
$A$ is symmetric and therefore has only real eigenvalues.
If the graph is $d$-regular, then $d$ is an eigenvalue of $A$ and is also
its spectral radius.
The multiplicity of $d$ as an eigenvalue of $A$ equals the number of connected
components of the graph.
Thus the graph is connected if and only if $d$ is a simple eigenvalue of $A$.
The graph is bipartite if and only if $-d$ is an eigenvalue of $A$.
If the graph is bipartite, then its adjacency matrix $A$ looks like
\bd
A = \left[ \ba{cc} 0 & B \\ B^\top & 0 \ea \right] ,
\ed
where $B \in \bi^{|\V_r| \times |\V_c|}$ is called the
\textbf{biadjacency matrix}.
The eigenvalues of $A$ equal $\pm \s_1 , \ldots , \pm \s_l$ together
with a suitable number of zeros, where $l = \min \{ |\V_r| , |\V_c| \}$,
and $\s_1 , \ldots , \s_l$ are the singular values of $B$.
In particular, in a $\drc$-biregular graph, $\sqrt{d_r d_c}$ is the
largest singular value of $B$.
These and other elementary facts about graphs can be found in
\cite{Ram-Murty03}.

\begin{definition}\label{def:Rama-G}
A $d$-regular graph is said to be a \textbf{Ramanujan graph} if the
second largest eigenvalue by magnitude of its adjacency matrix,
call it $\l_2$, satisfies
\be\label{eq:21}
| \l_2 | \leq 2 \sqrt{d-1} .
\ee
A $d$-regular bipartite graph\footnote{Note that such a bipartite graph
must perforce be balanced with $|\V_r| = |\V_c|$ and $d_r = d_c = d$.}
is said to be a \textbf{bipartite Ramanujan graph} if the second largest
singular value of its biadjacency matrix, call it $\s_2$, satisfies
\be\label{eq:22}
\s_2 \leq 2 \sqrt{d-1} .
\ee
\end{definition}

Note the distinction being made between the two cases.
If a graph is $d$-regular and bipartite, then it cannot be a Ramanujan graph,
because in that case $\l_2 = -d$, which violates \eqref{eq:21}.
On the other hand, if it satisfies \eqref{eq:22}, then it is called
a bipartite Ramanujan graph.
Observe too that not all authors make this distinction.

\begin{definition}\label{def:Rama-B}
A $\drc$-biregular bipartite graph is said to be a \textbf{Ramanujan bigraph}
if the second largest
singular value of its biadjacency matrix, call it $\s_2$, satisfies
\be\label{eq:23}
\s_2 \leq \sqrt{d_r-1} + \sqrt{d_c-1} .
\ee
\end{definition}

It is easy to see that Definition \ref{def:Rama-B} contains the second case of
Definition \ref{def:Rama-G} as a special case when $d_r = d_c = d$.
A Ramanujan bigraph with $d_r \neq d_c$ is called an \textbf{unbalanced
Ramanujan bigraph}.

The rationale behind the bounds in these definitions is given the following
results.
In the interests of brevity, the results are paraphrased and the reader
should consult the original sources for precise statements.

\begin{theorem}\label{thm:21}
(Alon-Boppana bound; see \cite{Alon86}.)
Fix $d$ and let $\nai$ in a $d$-regular graph with $n$ vertices.
Then
\be\label{eq:24}
\liminf_{\nai} | \l_2 | \geq 2 \sqrt{d-1} .
\ee
\end{theorem}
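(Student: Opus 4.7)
I would use the trace method, which avoids any girth hypothesis. The central identity is
\[ \tr(A^{2k}) \;=\; \sum_{i=1}^n \l_i^{2k} \;=\; \#\{\text{closed walks of length } 2k \text{ in } G\}, \]
and my plan is to squeeze $|\l_2|$ by combining a lower bound on $\tr(A^{2k})$ coming from the $d$-regular infinite tree $T_d$ with an upper bound obtained by peeling off the contributions of the eigenvalues $\pm d$.

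For the lower bound, I would fix a vertex $v$ and use the universal covering map $\pi \colon T_d \to G$ sending a chosen lift of $v$ to $v$. Walks of length $2k$ starting at the chosen lift in $T_d$ are in bijection (via $\pi$) with walks of length $2k$ starting at $v$ in $G$, and closed walks at the lift project to closed walks at $v$. Letting $w_{2k}$ denote the number of closed walks of length $2k$ at the root of $T_d$, this yields at least $w_{2k}$ closed walks at each vertex of $G$ and hence $\tr(A^{2k}) \geq n\, w_{2k}$. For the upper bound, $\l = d$ contributes $d^{2k}$ and, in the bipartite case, $\l = -d$ contributes another $d^{2k}$; every remaining eigenvalue is bounded in magnitude by $|\l_2|$, so $\tr(A^{2k}) \leq 2 d^{2k} + (n-2)\,|\l_2|^{2k}$. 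Rearranging,
\[ |\l_2|^{2k} \;\geq\; \frac{n\, w_{2k} - 2 d^{2k}}{n-2}, \]
which, for each fixed $k$, gives $\liminf_{\nai} |\l_2| \geq w_{2k}^{1/(2k)}$.

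To finish, I would invoke the classical asymptotic $w_{2k}^{1/(2k)} \ap 2\sqrt{d-1}$ as $k \ap \infty$, which identifies $2\sqrt{d-1}$ as the spectral radius of the adjacency operator on $T_d$. Letting $k \ap \infty$ after $\nai$ then yields $\liminf_{\nai} |\l_2| \geq 2\sqrt{d-1}$, as required.

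The step I expect to be the main obstacle is the universal-cover comparison $\tr(A^{2k}) \geq n\, w_{2k}$: it is short to state but needs care, since only closed walks in $T_d$ that return to the chosen lift of $v$ (and not merely to some other preimage of $v$) are guaranteed to come from this construction, giving the inequality one direction rather than equality. A more elementary route is to decompose each closed walk on $T_d$ into its first-return excursions, which exhibits $w_{2k}$ as a Catalan-weighted sum in the $d-1$ non-backtracking continuations and makes the growth rate $2\sqrt{d-1}$ explicit without invoking covering-space language.
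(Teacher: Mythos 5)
Your argument is correct and complete in outline; note that the paper itself offers no proof of this statement --- it is quoted (in paraphrased form) from Alon's 1986 paper, with the reader directed to the original source --- so there is no internal proof to compare against. Your trace/moment method is the standard self-contained route to the $\liminf$ form stated here: the identity $\tr(A^{2k})=\sum_i \l_i^{2k}$, the lower bound $\tr(A^{2k})\ge n\,w_{2k}$ via the universal cover (or, as you note, via the elementary observation that every closed walk at the root of the $d$-regular tree can be injectively realized as a closed walk at any vertex of $G$, since lifting walks from $G$ to $T_d$ is a bijection on walks with fixed start), the peeling of at most two eigenvalues $\pm d$, and finally $w_{2k}^{1/(2k)}\ap 2\sqrt{d-1}$, e.g.\ from $w_{2k}\ge C_k(d-1)^k$ with $C_k$ the Catalan number. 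The two points that need care are both handled correctly: the covering comparison is an inequality in the right direction (closed walks in $G$ at $v$ may lift to non-closed walks in $T_d$, but every closed walk at the lift projects to a distinct closed walk at $v$), and the order of limits is harmless because the bound $\liminf_{\nai}|\l_2|\ge w_{2k}^{1/(2k)}$ holds for each fixed $k$ separately, so one may take the supremum over $k$ afterwards. One cosmetic remark: since the paper's $\l_2$ is the second largest eigenvalue \emph{by magnitude}, you could peel off only the single eigenvalue $d$ and bound the remaining $n-1$ terms by $|\l_2|^{2k}$; your version with $2d^{2k}$ is also valid (it dominates the one-eigenvalue bound because $|\l_2|\le d$), it just isn't necessary to treat $-d$ separately. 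For contrast, the classical Alon--Boppana/Nilli argument instead builds an explicit test vector supported on balls around two vertices at large distance and applies Courant--Fischer, which yields the sharper quantitative statement $\l_2\ge 2\sqrt{d-1}-O(1/\mathrm{diam}(G)^2)$ and information about the second largest eigenvalue itself rather than its magnitude; your method trades that precision for brevity, which is entirely adequate for the $\liminf$ claim as stated.
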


\begin{theorem}\label{thm:22}
(See \cite{Feng-Li96}.)
Fix $d_r,d_c$ and let $n_r, n_c$ approach infinity subject to $d_r n_r = d_c n_c$.
Then
\be\label{eq:25}
\liminf_{n_r \ap \infty , n_c \ap \infty} \s_2 \geq \sqrt{d_r-1} + \sqrt{d_c-1} .
\ee
\end{theorem}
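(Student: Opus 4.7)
The plan is to follow the Alon--Boppana paradigm, using the trace method with the $(d_r,d_c)$-biregular infinite tree $T$ as the model graph. Two ingredients underlie the argument: first, the identity $\tr(A^{2k}) = 2\sum_{i=1}^{l}\sigma_i^{2k}$, which holds because the spectrum of the adjacency matrix $A$ consists of $\pm\sigma_1,\ldots,\pm\sigma_l$ together with zeros; second, the fact that $T$ is the universal cover of any $(d_r,d_c)$-biregular bipartite graph.

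First I would exploit the covering map $T \to (\V,\E)$: every closed walk of length $2k$ in $T$ based at a lift of a vertex $v$ projects injectively to a closed walk of length $2k$ in $(\V,\E)$ based at $v$. Writing $a_k$ and $b_k$ for the number of closed walks of length $2k$ in $T$ starting at a vertex of degree $d_r$ and $d_c$ respectively, this yields $\tr(A^{2k}) \geq n_r a_k + n_c b_k$. Combining with the crude estimate $\sum_{i=1}^{l}\sigma_i^{2k} \leq \sigma_1^{2k} + (l-1)\sigma_2^{2k}$ and the identity $\sigma_1 = \sqrt{d_r d_c}$ noted in the excerpt gives
\[
\sigma_2^{2k} \;\geq\; \frac{n_r a_k + n_c b_k - 2(d_r d_c)^k}{2(l-1)}.
\]

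Second, I would invoke the classical fact that the adjacency operator on $T$ has spectral radius $\rho := \sqrt{d_r-1}+\sqrt{d_c-1}$, so that $a_k^{1/(2k)}, b_k^{1/(2k)} \to \rho$. Let $k = k(n)$ tend to infinity with $n := n_r + n_c$ slowly enough that $k = o(\log n)$; the choice $k = \lfloor\sqrt{\log n}\,\rfloor$ works. Since $\sqrt{d_r d_c} > \rho$, the slow growth of $k$ forces $(d_r d_c)^k = o(n_r a_k)$, so the numerator above equals $(1-o(1))(n_r a_k + n_c b_k)$. Taking the $2k$-th root, and using that $n_r/l$ is bounded (because $l = \min\{n_r,n_c\}$ and $d_r n_r = d_c n_c$ pins $n_r/n_c$ at $d_c/d_r$), one obtains $\sigma_2 \geq (\rho - \epsilon)(1 - o(1))$ for every fixed $\epsilon > 0$, hence $\liminf \sigma_2 \geq \rho$.

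The step I expect to be the main obstacle is the input $a_k^{1/(2k)} \to \rho$, i.e.,\ the spectral-radius computation for the biregular tree. For the $d$-regular tree this is Kesten's classical return-probability calculation. The biregular analogue can be derived either by writing the two coupled first-return generating functions for walks at a degree-$d_r$ and a degree-$d_c$ vertex and extracting the branch point of the resulting quadratic, or by computing the explicit bipartite analogue of the Kesten--McKay spectral measure on $T$ and reading off its support. Once this input is in hand, the remaining trace-method manipulations sketched above are routine.
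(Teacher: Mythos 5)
The paper does not actually prove Theorem~\ref{thm:22}: it is stated as a paraphrase of the Feng--Li bound and the reader is referred to \cite{Feng-Li96}, so there is no in-paper argument to compare against. Judged on its own, your trace-method proof is correct in outline and is the standard Alon--Boppana-type argument (and is in the same spirit as the cited source): the identity $\tr(A^{2k})=2\sum_i\sigma_i^{2k}$, the injection of closed walks lifted from the $(d_r,d_c)$-biregular tree, the subtraction of the $\sigma_1^{2k}=(d_rd_c)^k$ term made negligible by taking $k=o(\log n)$, and the normalization $(n_r/(2(l-1)))^{1/(2k)}\to 1$ (valid because $n_r/n_c=d_c/d_r$ is pinned) all go through. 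You have also correctly isolated the one nontrivial input, namely $a_k^{1/(2k)}\to\rho=\sqrt{d_r-1}+\sqrt{d_c-1}$. Be aware that this statement is slightly stronger than ``the adjacency operator of the tree has norm $\rho$'': you need the spectral measure of $A_T$ at the base vertex $\delta_v$ to charge every neighborhood of $\rho$, since $a_k=\langle\delta_v,A_T^{2k}\delta_v\rangle$ only sees that measure. Both of your proposed routes (the branch point of the coupled first-return generating functions, or the explicit biregular analogue of the Kesten--McKay measure, whose support does reach $\rho$) settle this, so the gap is one of supplying a known computation rather than of logic. With that input written out, the proof is complete.
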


Given that a $d$-regular graph has $d$ as its largest eigenvalue $\l_1$,
a Ramanujan graph is one for which the ratio $\l_2/\l_1$ is as small as
possible, in view of the Alon-Boppana bound of Theorem \ref{thm:21}.
Similarly, given that a $\drc$-regular bipartite graph has
$\s_1 = \sqrt{d_r d_c}$, a Ramanujan bigraph is one for which the ratio
$\s_2/\s_1$ is as small as possible, in view of Theorem \ref{thm:22}.
% Therefore, in order to apply the sufficient conditions of Theorem \ref{thm:11}
% using a biregular graph to generate the samples, it makes sense to use
% a Ramanujan graph for square matrices and an unbalanced
 % Ramanujan bigraph in the case of rectangular matrices.

In a certain sense, Ramanjuan graphs and Ramanujan bigraphs are pervasive.
To be precise, if $d$ is kept fixed and $\nai$, then the fraction of
$d$-regular, $n$-vertex graphs that satisfy the Ramanujan property
approaches one; see \cite{Friedman03,Friedman08}.
Similarly, if $d_r,d_c$ are kept fixed and $n_r, n_c \ap \infty$
(subject of course to the condition that $d_r n_r = d_c n_c$, then
the fraction of $\drc$-biregular graphs that are Ramanujan bigraphs
approaches one; see \cite{Brito-et-al-arxiv18}.
However, despite their prevalence, there are relatively few \textit{explicit}
methods for constructing Ramanujan graphs.
Many of the currently known techniques are reprised in Section \ref{sec:Rama-G}.
% Accordingly, in Section \ref{sec:Rama-G} we reprise several
% existing techniques for constructing Ramanujan graphs.
% Note that, until now, there has not been an explicit construction of
% a Ramanujan bigraph.
\section{Two New Families of Unbalanced Ramanujan Bigraphs}\label{sec:Rama-B}

At the moment, there is not a single \textit{explicitly constructible}
family of unbalanced Ramanujan bigraph, that is, an unbalanced biregular
bipartite graph that satisfies the inequality \eqref{eq:23}.
The objective of this section is to present what the authors believe is
the first such explicit construction.

According to a recent result \cite{Brito-et-al-arxiv18}, randomly generated
$\drc$-biregular bipartite graphs with $(n_r,n_c)$ vertices 
satisfy the Ramanujan property with probability approaching one as
$n_r,n_c$ simultaneously approach infinity (of course, while satisfying
the constraint that $d_r n_r = d_c n_c$).
This result generalizes an earlier result due to \cite{Friedman03,Friedman08}
which states that randomly generated $d$-regular, $n$-vertex graph
satisfies the Ramanujan property as $\nai$ with probability approaching one.
A ``road map'' for constructing Ramanujan bigraphs is given in
\cite{Ball-Ciu11}, and 
some abstract constructions of Ramanujan bigraphs are given in
\cite{Ballantine-et-al15,Evra-Parz18}.
These bigraphs have degrees $(p+1,p^3+1)$ for various values of $p$,
such as $p \equiv 5 \mod 12$, $p \equiv 11 \mod 12$ \cite{Ballantine-et-al15},
and $p \equiv 3 \mod 4$ \cite{Evra-Parz18}.\footnote{The authors thank
Prof.\ Cristina Ballantine for aiding us in interpreting these papers.}
For each suitable choice of $p$, these papers lead to an infinite family
of Ramanujan bigraphs.
At present, these constructions are not explicit in terms of resulting in a
biadjacency matrix of $0$s and $1$s.
There is a paper under preparation by the authors of these
papers to make these
constructions explicit.
% However, even after that paper, the class of bigraphs has only one
% adjustable parameter, namely $p$.
% Even though asymmetric Ramanujan bigraphs are known to exist,
% there is not a single explicit construction of such graphs until now.

In contrast, in this section, we state and prove two such explicit
constructions, namely $(lq,q^2)$-biregular graphs where $q$ is any prime and $l$
is any integer that satisfies $2 \leq l \leq q$,
and $(q^2,lq)$-biregular graphs where $q$ is any prime and $l$ is any
integer greater than $q$.
Thus we can constuct Ramanujan bigraphs for a broader range of degree-pairs
compared to \cite{Ballantine-et-al15,Evra-Parz18}.
In particular, for $l = q$ we generate a new class of
Ramanujan graphs.
However, for a given pair of integers $l,q$, we can construct only one
Ramanujan bigraph.

Our construction is based
on so-called ``array code'' matrices from LDPC (low density parity check)
coding theory, first introduced in \cite{Fan00,Yang-Hell-TIT03}.
Suppose $q$ is a prime number, and let
$P\in \bi^{q \times q}$ be a cyclic shift permutation matrix on $q$ numbers.
Then the entries of $P$ can be expressed as
\bd
P_{ij} = \left\{ \ba{ll} 1, & j = i-1 \mod q \\ 0, & \mbox{otherwise} .
\ea \right.
\ed
Now let $q$ be a prime number, and define
\be\label{eq:51}
B(q,l)
= \left[\ba{ccccc}
I_q & I_q & I_q  & \cdots & I_q\\
I_q &  P  & P^2  & \cdots & P^{(l-1)}\\
I_q & P^2 & P^4  & \cdots & P^{2(l-1)}\\
I_q & P^3 & P^6  & \cdots & P^{3(l-1)}\\
\vdots & \vdots  & \vdots & \vdots & \vdots\\
I_q & P^{(q-1)}  & P^{2(q-1)} & \cdots & P^{(l-1)(q-1)}\\
\ea \right]
\ee
where $P^j$ represents $P$ raised to the power $j$.
Now $B = B(q,l)$ is binary with $q^2$ rows and $lq$ columns,
row degree of $l$ and column degree of $q$.
% In accordance with our convention that the biadjacency matrix of an
% asymmetric bipartite graph should have more columns than rows,
If $l < q$, we study the matrix $B^\top$, whereas if $l \geq q$,
we study $B$.
% Matrix $B$ has left degree $d_c = q$ and right degree $d_r = l$.
% The matrix $B$ has $q$ ones in each column and $l$ ones in each row.
% Therefore the bipartite graph defined by $B$ is biregular with $d_r = l$
% and $d_c = q$.
% Again, if $l < q$, then $B^\top$ would have row degree of $q$ and
% column degree of $l$.
In either case, the largest singular value of $B$ is $\sqrt{ql}$.
The fact that these bipartite graphs satisfy the Ramanujan property
is now established.

\begin{theorem}\label{thm:51}
\ben
\item Suppose $2 \leq l \leq q$.
Then the matrix $B^\top$ has a simple singular value of $\sqrt{ql}$,
$l(q-1)$ singular values of $\sqrt{q}$, and $l-1$ singular values of zero.
Therefore $B^\top$ represents a Ramanujan bigraph.
% Therefore the second largest singular value of $B^\top$ is $\sqrt{q}$
% which is $\leq \sqrt{q-1} + \sqrt{l}$ whenever $l \geq 2$.
\item Suppose $l \geq q$.
The matrix $B$ has a simple singular value of $\sqrt{ql}$.
% For $l\geq q$ there are two cases:
Now two subcases need to be considered:
\ben
\item When $l \mod q = 0$, in addition $B$ has
$(q-1)q$ singular values of $\sqrt{l}$
and $q-1$ singular values of $0$.
\item When $l \mod  q \neq 0$, let $k=l \mod  q$.
Then $B$ has, in addition,
$(q-1)k$ singular values of $\sqrt{l+q-k}$,
% $\sqrt{q\ceil{l/q}}$,
$(q-1)(q-k)$ singular values of $\sqrt{l-k}$, and
% $\sqrt{q\floor{l/q}}$ and
 $q-1$ singular values of $0$.
\een
Therefore, whenever $l\geq q$, $B(q,l)$ represents a Ramanujan bigraph.
\een
\end{theorem}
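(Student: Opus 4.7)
The plan is to diagonalize $B(q,l)$ via a discrete Fourier transform, exploiting the fact that every $q\times q$ block of $B$ is a power of the cyclic shift $P$. Since $P$ is circulant, $FPF^* = D := \mathrm{diag}(1, \omega, \omega^2, \ldots, \omega^{q-1})$, where $F$ is the $q\times q$ DFT matrix and $\omega = e^{2\pi i/q}$, so $FP^aF^* = D^a$. First I would apply the block-unitary transforms $I_q \otimes F$ on the left and $I_l \otimes F$ on the right to make every block of $B$ diagonal; then a permutation of rows and columns that gathers, across all $q\times q$ blocks, the coordinates sharing the same within-block position converts this unitary conjugate of $B$ into a block-diagonal matrix with $q$ blocks $M_0, M_1, \ldots, M_{q-1}$, where $M_\alpha \in \mathbb{C}^{q\times l}$ has $(i,j)$ entry $\omega^{\alpha(i-1)(j-1)}$. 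The singular values of $B$ are therefore the multiset union of those of the $M_\alpha$, which reduces the problem to $q$ decoupled matrices.

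Next I would dispose of $M_0$, the all-ones $q\times l$ matrix of rank $1$, which contributes the leading singular value $\sqrt{ql}$ together with zeros. For $\alpha \in \{1, \ldots, q-1\}$, setting $\zeta = \omega^\alpha$ (a primitive $q$-th root of unity, by primality of $q$), a direct computation yields
\[
(M_\alpha^* M_\alpha)_{j,j'} = \sum_{i=0}^{q-1} \zeta^{i(j'-j)} = \begin{cases} q & \text{if } j' \equiv j \pmod q, \\ 0 & \text{otherwise.} \end{cases}
\]
A simultaneous permutation of rows and columns that groups the column indices of $M_\alpha$ by their residue modulo $q$ makes $M_\alpha^* M_\alpha$ block-diagonal with blocks of the form $q \cdot J_s$ (scalar $q$ times an $s\times s$ all-ones matrix), one per residue class; each such rank-$1$ block contributes a single nonzero eigenvalue $qs$. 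When $l \leq q$, every block has $s = 1$, so $M_\alpha^* M_\alpha = qI_l$ and $M_\alpha$ has $l$ singular values equal to $\sqrt{q}$: combined with $M_0$ this gives Part~1. When $l \geq q$, writing $l = mq + k$ with $0 \leq k < q$, exactly $k$ residue classes contain $m+1$ indices and the other $q - k$ contain $m$, producing nonzero eigenvalues $q(m+1) = l + q - k$ with multiplicity $k$ and $qm = l - k$ with multiplicity $q - k$. Aggregating over the $q - 1$ nontrivial values of $\alpha$ and distinguishing $k = 0$ from $k \geq 1$ then yields Parts 2(a) and 2(b).

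Finally, I would verify the Ramanujan inequality \eqref{eq:23}. In every case the second largest singular value satisfies $\sigma_2 \leq \sqrt{l+q-1}$, and the bound $\sqrt{l+q-1} \leq \sqrt{l-1} + \sqrt{q-1}$ reduces after squaring to $(l-1)(q-1) \geq 1/4$, which holds trivially since $l, q \geq 2$.

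The hard part will be the bookkeeping in the $l \geq q$ case: keeping track of the sizes and multiplicities of the all-ones blocks in $M_\alpha^* M_\alpha$ as the residue classes mod $q$ are enumerated, and correctly tallying how the multiplicities combine across the $q - 1$ nontrivial values of $\alpha$. The use of $q$ prime is essential here, as it guarantees that $\omega^\alpha$ is a primitive $q$-th root of unity for every $\alpha \neq 0$, which is what makes $M_\alpha^* M_\alpha$ split cleanly into uniform all-ones blocks indexed by residue class. Once the Fourier decoupling reduces everything to a collection of small explicit matrices, the eigenvalue count and the Ramanujan check are essentially routine.
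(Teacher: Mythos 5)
Your proposal is correct, and it takes a genuinely different route from the paper. The paper never diagonalizes $B$ itself: it computes the Gram matrices $BB^\top$ (for $l\le q$) and $B^\top B$ (for $l\ge q$) blockwise, uses primality of $q$ to show $(BB^\top)_{ij}=\sum_{s=0}^{q-1}P^{(i-j)s}=\oneb_{q\times q}$ for $i\ne j$ and $qI_q$ for $i=j$, and then peels off the rank-one matrix $\oneb_{ql\times ql}$ so that what remains is block-diagonal with copies of the complete-graph Laplacian $qI_q-\oneb_{q\times q}$, whose spectrum is known; the $l\ge q$ case is handled by viewing $B^\top B$ as concatenated copies of the $l=q$ Gram matrix and counting. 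Your DFT conjugation $FP^aF^*=D^a$ followed by the interleaving permutation is the Fourier-side version of the same character sum (your $\sum_i\zeta^{i(j'-j)}$ is exactly the scalar shadow of the paper's $\sum_s P^{(i-j)s}$), but it buys you something real: both cases $l\le q$ and $l\ge q$ are treated uniformly by the single family of matrices $M_\alpha^*M_\alpha$, and the bookkeeping for $l\bmod q=k$ reduces to counting residue classes of $\{1,\dots,l\}$ mod $q$, each contributing a rank-one all-ones block $qJ_s$ with eigenvalue $qs$. This makes the subcase 2(b) count ($k$ classes of size $m+1$ giving $q(m+1)=l+q-k$, and $q-k$ classes of size $m$ giving $qm=l-k$) fully transparent, whereas the paper's treatment of that subcase is asserted rather tersely. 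Your closing Ramanujan check via $\sigma_2\le\sqrt{l+q-1}\le\sqrt{l-1}+\sqrt{q-1}$ (equivalent to $(l-1)(q-1)\ge 1/4$) is also cleaner and more explicit than the paper, which leaves that verification implicit.
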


\begin{corollary}\label{coro:511}
For every prime number $q$, the matrix $B(q,q)$ defined in \eqref{eq:51}
is square, and is the adjacency matrix of a Ramanujan graph
with $d = q$ and $n = q^2$.
In this case $\l_1 = q$, and $\l_2 = \sqrt{q}$ with a multiplicity of
$q(q-1)$. The remaining $q-1$ eigenvalues of $B(q,q)$ are zero.
\end{corollary}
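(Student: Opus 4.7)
The plan is to deduce Corollary \ref{coro:511} directly from Theorem \ref{thm:51} by promoting the singular-value information to eigenvalue information via a normality argument. Since $B(q,l)$ has $q^2$ rows and $lq$ columns, the choice $l = q$ makes $B(q,q)$ a square $q^2 \times q^2$ matrix, which establishes squareness. Each row contains $l = q$ ones and each column contains $q$ ones, so the all-ones vector $\1$ is both a left and right eigenvector of $B(q,q)$ with eigenvalue $q$. Case 2(a) of Theorem \ref{thm:51} applied with $l = q$ (so $l \mod q = 0$) then gives the singular values of $B(q,q)$: the value $\sqrt{ql} = q$ with multiplicity one, the value $\sqrt{l} = \sqrt{q}$ with multiplicity $(q-1)q = q(q-1)$, and $0$ with multiplicity $q-1$, which sum to $q^2$.

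The main step is to verify that $B = B(q,q)$ is a normal matrix, which will ensure that the moduli of its eigenvalues coincide with its singular values. Using the block structure in which the $(i,j)$-block is $P^{ij}$ and the fact that $P^\top = P^{-1}$, a direct calculation yields
\bd
(B B^\top)_{ij\text{-block}} \; = \; \sum_{k=0}^{q-1} P^{ik}(P^{jk})^\top \; = \; \sum_{k=0}^{q-1} P^{(i-j)k} \; = \; \begin{cases} qI & \text{if } i = j, \\ J & \text{if } i \neq j, \end{cases}
\ed
where $J$ is the $q \times q$ all-ones block. The off-diagonal case exploits the primality of $q$: since $i-j$ is then invertible modulo $q$, the exponents $(i-j)k$ traverse a complete residue system as $k$ runs from $0$ to $q-1$, so the sum collapses to $I + P + P^2 + \cdots + P^{q-1} = J$. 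An entirely analogous computation yields the same block expression for $B^\top B$, and hence $B B^\top = B^\top B$.

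Normality now gives $|\lambda_i(B)| = \sigma_i(B)$ for every $i$; combined with Perron-Frobenius applied to the non-negative matrix $B$, this forces $\lambda_1 = q$ to be simple, $q(q-1)$ eigenvalues of modulus $\sqrt{q}$, and $q-1$ zero eigenvalues, which is the claimed spectrum. The Ramanujan inequality \eqref{eq:21} reduces to $\sqrt{q} \leq 2\sqrt{q-1}$, equivalent to $q \geq 4/3$, which holds for every prime $q \geq 2$. The principal obstacle is the normality verification encoded in the block identity above; once that is in hand, the singular-value conclusions of Theorem \ref{thm:51} transfer immediately to eigenvalue conclusions and the Ramanujan bound becomes an elementary check.
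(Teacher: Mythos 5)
Your proposal is correct, and it is in fact more complete than what the paper offers: the paper states Corollary \ref{coro:511} with no separate proof, implicitly reading the singular-value count of Theorem \ref{thm:51} (case 2(a) with $l=q$, or equivalently case 1 with $l=q$) directly as an eigenvalue count. That passage from singular values to eigenvalues is exactly the non-trivial point, since $B(q,q)$ is square but \emph{not} symmetric, and your normality verification $BB^\top = B^\top B$ (both having diagonal blocks $qI$ and off-diagonal blocks $J$, by the complete-residue-system argument that the paper itself uses inside the proof of Theorem \ref{thm:51} for $BB^\top$) is the right bridge. Two small remarks. First, normality only yields $|\lambda_i| = \sigma_i$; the $q(q-1)$ eigenvalues of modulus $\sqrt{q}$ are in general genuinely complex (in the Fourier basis $B(q,q)$ block-diagonalizes into DFT-type blocks whose eigenvalues are $\sqrt{q}$ times fourth roots of unity), so the corollary's literal claim ``$\l_2 = \sqrt{q}$'' should be read as ``$|\l_2| = \sqrt{q}$,'' consistent with Definition \ref{def:Rama-G}, which bounds the second eigenvalue \emph{by magnitude}; you phrase this correctly (``eigenvalues of modulus $\sqrt{q}$'') and should just flag that this is what the corollary means. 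Second, the Perron--Frobenius appeal is slightly more than you need: since $q$ is already exhibited as an eigenvalue via $\1$ and the singular value $q$ is simple, normality alone forces $\l_1 = q$ to be simple. Your Ramanujan check $\sqrt{q} \leq 2\sqrt{q-1}$ for all primes $q \geq 2$ is correct.
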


\begin{proof}
Let $B$ be a shorthand for $B(q,l)$.
Note that $P$ is a cyclic shift permutation; therefore $P^\top = P^{-1}$.
The proof consists of computing $BB^\top$, $B^\top B$ and determining its eigenvalues.
Throughout we make use of the fact that $P^\top = P^{-1}$.

We begin with the case $l \leq q$.
Use block-partition notation to divide $B B^\top$ into $l$ blocks of
size $q \times q$.
Then
\begin{eqnarray*}
(B B^\top)_{ij} & = & \sum_{s=1}^q P^{(i-1)(s-1)} (P^\top)^{(s-1)(j-1)} \\
& = & \sum_{s=1}^q P^{(i-j)(s-1)} = \sum_{s=0}^{q-1} P^{(i-j)s} .
\end{eqnarray*}
It readily follows that
\bd
(B B^\top)_{ii} = q I_q , i = 1 , \ldots , q .
\ed
Now observe that, for any nonzero integer $k$, the set of numbers $ks \mod q$
as $s$ varies over $\{ 0 , \ldots , q-1 \}$ equals $\{ 0 , \ldots , q-1 \}$.
(This is where we use the fact that $q$ is a prime number.)
Therefore, whenever $i \neq j$, we have that
\bd
(B B^\top)_{ij} = \sum_{s=0}^{q-1} P^s = \oneb_{q \times q} ,
\ed
where $\oneb_{q \times q}$ denotes the $q \times q$ matrix whose entries are
all equal to one.
Observe that the largest eigenvalue of $B B^\top$ is $ql$, with
normalized eigenvector $(1/\sqrt{ql}) \oneb_{ql}$.
Therefore if we define $M_l = B B^\top - \oneb_{ql \times ql}$
and partition it commensurately with $B$, we see that the off-diagonal blocks
of $M_l$ are all equal to zero, while the diagonal blocks are all identical
and equal to $q I_q - \oneb_{q \times q}$.
This is the Laplacian matrix of a fully connected graph with $q$ vertices,
and thus has $q-1$ eigenvalues of $q$ and one eigenvalue of $0$.
Therefore $M_l = B B^\top - \oneb_{ql \times ql}$ has $l(q-1)$ eigenvalues
of $q$ and $l$ eigenvalues of $0$.
Moreover, $\oneb_{ql}$ is an eigenvector of $M$ corresponding to the
eigenvalue zero.
Therefore $B B^\top = M_l + \oneb_{ql} \oneb_{ql}^\top$
has a single eigenvalue of $ql$,
$l(q-1)$ eigenvalues of $q$, and $l-1$ eigenvalues of $0$.
This is equivalent to the claim about singular values of $B^\top$.

Now we study the case where $l \geq q$.
Let $M_q \in \bi^{q^2 \times q^2}$ denote the matrix in the previous case
with $l = q$.
This matrix can be block-partitioned into $q \times q$ blocks, with
\bd
(M_q)_{ij} = \left\{ \ba{ll} q I_q - \oneb_{q \times q} , &
\mbox{if } j-i \equiv 0 \mod q , \\
0, & \mbox{otherwise} .
\ea \right.
\ed
Now consider the subcase that $l \equiv 0 \mod q$.
Then $B^\top B$ consists of $l/q$ repetitions of $M_q$ on each block row
and block column of size $q^2 \times q^2$.
Therefore each such row and column block gives $(q-1)q$ eigenvalues of magnitude
$l$ and rest of the eigenvalues of will be zero.
Next, if $l \mod q =: k \neq 0$, then $B^\top B$ consists of
$(l-k)/q$ repititions of $M_q$ on each block row and column of size $q^2 \times q^2$.
In addition it contains first $k$ column blocks of $M_q$ concatenated
$l$ times column-wise as the last column blocks.
It contains first $k$ row blocks of $M_q$ concatenated $l-k$ times row-wise as
the last block of rows.
The extra $k$ rows and colums of $B^\top B$ give
$(q-1)k$ eigenvalues of magnitude $l+(q-k)$.
Another set of row and column blocks give $(q-1)(q-k)$ eigenvalues of magnitude $l-k$.
The remaining eigenvalues are $0$.
\end{proof}

Note that, when $l = q$, the construction in \eqref{eq:51} leads to a new class
of Ramanujan graphs of degree $q$ and $q^2$ vertices.
In our terminology, it is a ``one-parameter'' family.
So far as we are able to determine, this family is new
and is not contained any other known family.

% \section{Construction of Ramanujan Graphs: Implementation Details}\label{sec:Details}

\section{Some Constructions of Ramanujan Graphs}\label{sec:Rama-G}

At present there are not too many methods for \textit{explicitly}
constructing Ramanujan graphs.
In this section we reprise most of the known methods.
% These techniques are used in the phase transition studies of Section
% \ref{sec:Phase}.
Note that the authors have written \texttt{Matlab} codes for all of the
constructions in this section, except the Winnie Li construction
in Section \ref{ssec:33}; these codes are available upon request.

The available construction methods can be divided into two categories,
which for want of a better terminology we call ``one-parameter''
and ``two-parameter'' constructions.
One-parameter constructions are those for which, once the degree $d$ 
of the graph is
specified, the number $n$ of vertices is also fixed by the procedure.
In contrast, two-parameter constructions are those in which it is
possible to specify $d$ and $n$ independently.
The methods of Lubotzky-Phillips-Sarnak (LPS) and of Gunnells are
two-parameter, while those of Winnie Li and Bibak et al.\ are one-parameter.
Of course, not all combinations of $d$ and $n$ are permissible.

In this connection it is worth mentioning the results of 
\cite{MSS-STOC13a,MSS-Annals15a,MSS-FOCS15}, which show that
there exist \textit{bipartite} Ramanujan graphs of all degrees and all sizes.
However these results
do not imply the existence of Ramanujan graphs of all sizes and degrees.
Moreover, the ideas in \cite{MSS-STOC13a,MSS-Annals15a,MSS-FOCS15}
do not lead to an explicit construction.
There is a preprint \cite{Cohen16} that claims to give a polynomial time
algorithm for implementing the construction of
\cite{MSS-STOC13a,MSS-Annals15a}.
However, no code for the claimed implementation is available.
Some of the methods discussed here lead to \textit{bipartite} Ramanujan graphs.
It is presumably of interest to show that these constructions in fact
lead to non-bipartite Ramanujan graphs.
This is done below.

All but one of the constructions described below are Cayley graphs.
So we begin by describing that concept.
Suppose $G$ is a group, and that $S \seq G$ is ``symmetric'' in that
$a \in S$ implies that $a^{-1} \in S$.
Then the Cayley graph $\C(G,S)$ has the elements of $G$ as the vertex set,
and the edge set is of the form $(x,xa), x \in G, a \in S$.
Due to the symmetry of $S$, the graph is undirected even if $G$ is
noncommutative.

\subsection{Lubotzky-Phillips-Sarnak Construction}\label{ssec:31}

The Lubotzky-Phillips-Sarnak (referred to as LPS hereafter) construction
\cite{Lubotzky-et-al88}
makes use of two unequal primes $p,q$, each of which is $\equiv 1 \mod 4$.
% It is described here in some detail, because there are some aspects of
% the LPS construction that are not completely specified in
% the original paper.
% These are studied and clarified in Section \ref{sec:Further}.
% 
As is customary, let $\Fq$ denote the finite field with $q$ elements,
and let $\Fq^*$ denote the set of nonzero elements in $\Fq$.
The general linear group $GL(2,\Fq)$ consists of all $2 \times 2$
matrices with elements in $\Fq$ whose determinant is nonzero.
If we define an equivalence relation $\sim$ on $GL(2,\Fq)$ via
$A \sim B$ whenever $A = \al B$ for some $\al \in \Fq^*$, then the
resulting set of equivalence classes $GL(2,\Fq) / \sim$ is 
the \textbf{projective general linear group} $\PGq$.
Next, it is shown in \cite{Lubotzky-et-al88} that there are exactly $p+1$
solutions of the equation
\be\label{eq:31}
p = a_0^2 + a_1^2 + a_2^2 + a_3^2 ,
\ee
where $a_0$ is odd and positive, and $a_1, a_2, a_3$ are even (positive or
negative).
Choose an integer $\i$ such that $\i^2 \equiv -1 \mod q$.
Thus $\i$ is a proxy for $\sqrt{-1}$ in the field $\Fq$.
Such an integer always exists. 

The LPS construction
is a Cayley graph where the group $G$ is the projective general linear
group $\PGq$.
The generator set $S$ consists of the $p+1$ matrices
\be\label{eq:32}
M_j = \left[ \ba{cc} a_{0j} + \i a_{1j} & a_{2j} + \i a_{3j} \\
-a_{2j} + \i a_{3j} & a_{0j} - \i a_{1j} \ea \right] \mod q ,
% j = 1 , \ldots , p+1 ,
\ee
as $(a_{0j}, a_{1j}, a_{2j}, a_{3j})$ range over all solutions of \eqref{eq:31}.
Note that each matrix $M_j$ has determinant $p \mod q$.
It is clear that the LPS graph is $(p+1)$-regular, and the number of vertices
equals the cardinality of $\PGq$, which is $q(q^2-1)$.

To proceed further, it is necessary to introduce the \textbf{Legendre
symbol}.
If $q$ is an odd prime and $a$ is not a multiple of $q$, define
\bd
\Leg{a}{q} = \left\{ \ba{ll}
1 & \mbox{if } \exists x \in \Zbb \st x^2 \equiv a \mod q , \\
-1 & \mbox{otherwise} \ea \right.
\ed
Partition $[q-1] := \{ 1 , \cdots , q-1 \}$
into two subsets, according to
\be\label{eq:33}
S_{1,q} := \left\{ a \in [q-1] : \Leg{a}{q} = 1 \right\} ,
\ee
\be\label{eq:34}
S_{-1,q} := \left\{ a \in [q-1] : \Leg{a}{q} = -1 \right\} .
\ee
Then it can be shown that each set $S_{1,q}$ and $S_{-1,q}$ consists of
$(q-1)/2$ elements of $[q-1]$.
One of the many useful properties of the Legendre symbol is that,
for integers $a,b \in \Zbb$, neither of which is a multiple of $q$, we have
\bd
\left( \frac{ab}{q} \right) = \left( \frac{a}{q} \right)
\left( \frac{b}{q} \right) .
\ed
Consequently, for a fixed odd prime number $q$, the map
\bd
a \mapsto \left( \frac{a}{q} \right) : \Zbb \setminus q \Zbb \ap \bp
\ed
is multiplicative.
Further details about the Legendre symbol can be found in any elementary
text on number theory; see for example \cite[Section 6.2]{Tattersall99},
or \cite[Section 5.2.3]{BCC09}.

The LPS construction gives two distinct kinds of graphs, depending
on whether $\Leg{p}{q} = 1$ or $-1$.
To describe the situation, let us partition $\PGq$ 
into two disjoint sets $\PSq$ and $\PScq$, that are defined next.
Partition $GL(2,\Fq)$ into two sets $GL1(2,\Fq)$ and $GL2(2,\Fq)$ as follows:
\bd
GL1(2,\Fq) = \{ A \in GL(2,\Fq) : {\rm det}(A) \in S_{1,q} \} ,
\ed
\bd
GL2(2,\Fq) = \{ A \in GL(2,\Fq) : {\rm det}(A) \in S_{-1,q} \} ,
\ed
Because of the multiplicativity of the Legendre symbol, it follows that
$GL1(2,\Fq)$ is a subgroup of $GL(2,\Fq)$.
Next, define
\bd
\PSq := GL1(2,\Fq) / \sim , 
\ed
\bd
\PScq := GL2(2,\Fq) / \sim .
\ed
Then $\PSq$ and $\PScq$ form a partition of $\PGq$, and each set contains
$(q(q^2-1))/2$ elements.

Now we come to the nature of the Cayley graph that is generated by the
LPS construction.
\bit
\item If $\Leg{p}{q} = 1$, then each $M_j$ maps $\PSq$ onto itself,
and $\PScq$ into itself.
Thus the Cayley graph consists of two disconnected components,
each with $(q(q^2-1))/2$ elements.
It is shown in \cite{Lubotzky-et-al88} that each component is a Ramanujan
graph.
It is shown below that the two graphs are actually isomorphic.
\item If $\Leg{p}{q} = -1$, then each $M_j$ maps $\PSq$ onto $\PScq$,
and vice versa.
In this case the Cayley graph of the LPS construction is a balanced 
bipartite graph, with $(q(q^2-1))/2$ elements in each component.
It is shown in \cite{Lubotzky-et-al88} that the graph is a bipartite
Ramanujan graph.
It is shown below that the bipartite graph can be converted to a
non-bipartite graph.
\eit

\subsection{Gunnells' Construction}\label{ssec:32}

Next we review the construction in \cite{Gunnells05}.
Suppose $q$ is a prime or a prime power, and as usual, let $\Fq$
denote the finite field with $q$ elements.
Let $l$ be any integer, and view $\Fbb_{q^l}$ as a linear vector space
over the base field $\Fq$.
Then the number of one-dimensional subspaces of $\Fbb_{q^l}$ equals
\bd
\nu(l,q) := \frac{q^l-1}{q-1} = \sum_{i=0}^{l-1} q^i .
\ed
Let us denote the set of one-dimensional subspaces by $\V_1$.
Correspondingly, the number of subpaces of $\Fbb_{q^l}$ of \textit{codimension}
one also equals $\nu(l,q)$.
Let us denote this set by $\V_{l-1}$ because (obviously) every subspace
of codimension one has dimension $l-1$.
The Gunnells construction is a bipartite graph with $\V_1$ and $\V_{l-1}$
as the two sets of vertices.
In this construction, there is an edge between $\S_a \in \V_1$ and
$\T_b \in \V_{l-1}$ if and only if $\S_a$ is a subspace of $\T_b$.
The Gunnells construction is a balanced bipartite graph with
\bd
n = |\V_1| = |\V_{l-1}| = \nu(l,q) = \sum_{i=0}^{l-1} q^i ,
\ed
vertices and is biregular with degree
\bd
d = \nu(l-1,q) = \sum_{i=0}^{l-2} q^i .
\ed
For ease of reference, we state properties of the Gunnells construction
as a theorem.

\begin{theorem}\label{thm:Gunn}
(See \cite[Theorem 3.2]{Gunnells05}.)
The Gunnells graph is a balanced bipartite Ramanujan graph with
$\s_1 = \nu(l-1,q)$.
Moreover all other singular values of $B(l,q)$ have magnitude $\sqrt{q^{l-2}}$.
\end{theorem}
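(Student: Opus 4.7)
The plan is to compute $BB^\top$ explicitly by a direct combinatorial argument, diagonalize it, and then read off the singular values of $B$. Let $B \in \{0,1\}^{\nu(l,q) \times \nu(l,q)}$ denote the biadjacency matrix, with rows indexed by $\S_a \in \V_1$ and columns by $\T_b \in \V_{l-1}$, and $B_{ab}=1$ iff $\S_a \subseteq \T_b$. Then $(BB^\top)_{aa'}$ equals the number of hyperplanes in $\Fbb_{q^l}$ containing both one-dimensional subspaces $\S_a$ and $\S_{a'}$.

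For any $k$-dimensional subspace $W \seq \Fbb_{q^l}$, the hyperplanes containing $W$ are in bijection with the one-dimensional subspaces of $\Fbb_{q^l}/W$, of which there are $\nu(l-k,q)$. Hence the diagonal entry is $\nu(l-1,q)=d$, and for $a \neq a'$ the subspace $\S_a+\S_{a'}$ is two-dimensional, giving $(BB^\top)_{aa'} = \nu(l-2,q)$. Thus
\be
BB^\top = (\nu(l-1,q)-\nu(l-2,q)) I + \nu(l-2,q) \oneb \oneb^\top = q^{l-2} I + \nu(l-2,q)\oneb \oneb^\top ,
\ee
using the telescoping identity $\nu(l-1,q)-\nu(l-2,q)=q^{l-2}$.

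Since $\oneb \oneb^\top$ has rank one with eigenvalue $\nu(l,q)$ on the span of $\oneb$ and zero on its orthogonal complement, the eigenvalues of $BB^\top$ are $q^{l-2}+\nu(l-2,q)\nu(l,q)$ with multiplicity one and $q^{l-2}$ with multiplicity $\nu(l,q)-1$. A short algebraic check using $\nu(m,q)=(q^m-1)/(q-1)$ gives
\be
(q^{l-1}-1)^2 - (q^{l-2}-1)(q^l-1) = q^{l-2}(q-1)^2 ,
\ee
which shows that $q^{l-2}+\nu(l-2,q)\nu(l,q) = \nu(l-1,q)^2$. Taking square roots, $B$ has a simple singular value $\sigma_1 = \nu(l-1,q)$ and all remaining singular values equal to $\sqrt{q^{l-2}}$, as claimed.

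Finally, to verify the bipartite Ramanujan property one must check that $\sqrt{q^{l-2}} \leq 2\sqrt{d-1}$, where $d-1 = q+q^2+\cdots+q^{l-2}$. This reduces to the inequality $q^{l-2} \leq 4(q+q^2+\cdots+q^{l-2})$, which holds trivially for $l\geq 3$ because the right side already contains the term $4q^{l-2}$; the case $l=2$ is degenerate (the graph is a perfect matching, $d=1$). The only step with any real content is the $BB^\top$ computation; once that is in hand, the spectrum, the singular values of $B$, and the Ramanujan bound all follow mechanically, so I do not anticipate a serious obstacle beyond carefully justifying the counting of hyperplanes through a $k$-dimensional subspace via the quotient argument.
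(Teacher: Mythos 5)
Your proposal is correct, and it actually supplies something the paper does not: the paper states Theorem \ref{thm:Gunn} purely by citation to Gunnells' Theorem 3.2 and gives no argument of its own. Your route --- computing $(BB^\top)_{aa'}$ as the number of hyperplanes of $\Fbb_{q^l}$ containing $\S_a + \S_{a'}$, which is $\nu(l-k,q)$ for a $k$-dimensional join, so that $BB^\top = q^{l-2} I + \nu(l-2,q)\, \oneb\oneb^\top$ --- is the standard point--hyperplane incidence computation, and the algebra checks out: $(q^{l-1}-1)^2 - (q^{l-2}-1)(q^l-1) = q^{l-2}(q-1)^2$ does give $q^{l-2} + \nu(l-2,q)\nu(l,q) = \nu(l-1,q)^2$, confirming $\s_1 = \nu(l-1,q)$ with the remaining $\nu(l,q)-1$ singular values equal to $\sqrt{q^{l-2}}$. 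Two small points of hygiene: the hyperplanes of $V$ containing $W$ are naturally in bijection with the \emph{hyperplanes} of $V/W$ (equivalently, by duality, with the one-dimensional subspaces of the dual of $V/W$); the count $\nu(l-k,q)$ is the same either way, so no harm done. And you are right to flag $l=2$ as degenerate --- there $B$ is a permutation matrix, $\s_2 = 1 > 2\sqrt{d-1} = 0$, so the Ramanujan claim implicitly requires $l \geq 3$, a hypothesis the paper leaves unstated. For $l \geq 3$ your inequality $q^{l-2} \leq 4\sum_{i=1}^{l-2} q^i$ is immediate, so the proof is complete and, unlike the paper's treatment, self-contained.
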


It is shown below that the bipartite graph can be converted to a
non-bipartite graph.

\subsection{Winnie Li's Construction}\label{ssec:33}

Suppose $G$ is an Abelian group and let $n$ denote $|G|$.
Then a \textbf{character} $\chi$ on $G$ is a homomorphism $\chi: G \ap S^1$,
where $S^1$ is the set of complex numbers of magnitude one.
Thus $\chi(ab) = \chi(a) \chi(b)$ for all $a,b \in G$.
There are precisely $n$ characters on $G$, and it can be shown that
$[\chi(a)]^n = 1$ for each character $\chi$ and each $a \in G$.
Therefore each character maps an element of $G$ into an $n$-th root of one.
The character defined by $\chi_0(a) = 1$ for all $a \in G$ is called the
\textbf{trivial character}.
Let us number the remaining characters on $G$ as $\chi_i, i \in [n-1]$ in
some manner.

Suppose that $S$ is a symmetric subset of $G$, and consider the
associated Cayley graph.
Thus the vertices of the graph are the elements of $G$, and the edges
are $(x,xa), x \in G , a \in S$.
Clearly the Cayley graph has $n$ vertices and is $d$-regular where
$d = |S|$.
Now a key result {\cite[Proposition 1.(1)]{Li92}}
states that the eigenvalues of the adjacency matrix are
\bd
\l_i = \sum_{s \in S} \chi_i(s), i = 0, 1 , \ldots , n-1 .
\ed
If $i = 0$, then $\l_0 = d$, which we know (due to regularity).
Therefore, if the set $S$ can somehow be chosen in a manner that
\bd
\left| \sum_{s \in S} \chi_i(s) \right| \leq 2 \sqrt{d-1} , i \in [n-1] ,
\ed
then the Cayley graph would have the Ramanujan property.
Several Ramanujan graphs can be constructed using the above approach.
In particular, the following construction is described in \cite[Section 2]{Li92}.

Let $\Fq$ be a finite field, so that $q$ is a prime or prime power.
Let $\Fbb_{q^2}$ be a degree $2$ extension of $\Fq$ and choose $S$ to be
the set of all primitive elements of $\Fbb_{q^2}$, that is, the set of elements
of multiplicative norm 1.  Here, the multiplicative norm of $\alpha \in \Fbb_{q^2}$ 
is defined as
\bd
N(\alpha) := \alpha \cdot \alpha^q = \alpha^{q+1}.
\ed

Note that $S$ is symmetric and contains $q+1$ elements.
A deep and classical theorem due to Deligne \cite{Deligne74} states that
for every nontrivial character $\chi_i, i \geq 1$ on the additive group
of $\Fbb_{q^2}$, we have
\bd
\left| \sum_{s \in S} \chi_i(s) \right| \leq 2 \sqrt{q} , i \in [q^2-1] .
\ed
Therefore the Cayley graph with $\Fbb_{q^2}$ as the group and $S$ as the
generator set has the Ramanujan property.

\subsection{Bibak et al.\ Construction}\label{ssec:34}

The next construction is found in \cite{Bibak-et-al16}.
Suppose $q$ is a prime $\equiv 3 \mod 4$.
Let $G$ be the additive group $\Fbb_{q^2}$, consisting of pairs $z = (x,y)$
where $x,y \in \Fq$.
Clearly $n = |G| = q^2$.
The ``norm'' of an element $z$ is defined as $( x^2 + y^2 ) \mod q$,
and the set $S$ consists of all $z$ such that the norm equals one.
It can be shown that $|S| = q+1$ and that $S$ is symmetric.
It is shown in \cite{Bibak-et-al16} that the associated Cayley graph has
the Ramanujan property.

\section{Further Analysis of Earlier Constructions}\label{sec:Further}

It is seen from Section \ref{sec:Rama-G} that if $\Leg{p}{q} = 1$,
then the LPS construction leads to a \textbf{disconnected} graph, consisting
of two components with an equal number of vertices (and edges).
It would be of interest to know whether the two connected components
are isomorphic to each other, that is, whether LPS construction leads
to two distinct Ramanujan graphs, or just one.
In general, determining whether two graphs are isomorphic is in the class NP,
(so that it is easy to determie whether a candidate solution is in fact
a solution).
The best available algorithm \cite{Babai-ICM18} provides a
``quasi-polynomial time'' algorithm to solve this problem.
Therefore the problem is not trivial.
However, in the case of the two components of an LPS construction,
it is easy to demonstrate that the two graphs are indeed isomorphic.
% It turns out to be really easy to demonstrate that this is indeed so.
Next, if the Legendre symbol $\Leg{p}{q} = -1$,
the LPS construction leads to a \textit{bipartite} Ramanujan graph.
The Gunnells construction \textit{always} leads to a bipartite Ramanujan graph.
Graph theorists prefer non-bipartite (or ``real'') Ramanujan graphs to
bipartite Ramanujan graphs.
It is easy to show that every Ramanujan graph leads to a bipartite Ramanujan
graph.
Specifically, suppose $(\V,\E)$
is a Ramanujan graph with adjacency matrix $A$.
Define two sets $\V_r, \V_c$ to be copies of $\V$, and define an edge
$(v_{ri},v_{cj})$ in the bipartite graph if and only if there is an edge
$(v_i,v_j)$ in the original graph.
Then it is obvious that the biadjacency matrix of this bipartite graph
is also $A$.
Hence the bipartite graph is a bipartite Ramanujan graph if and only if
the original graph also has the Ramanujan property.
However, the converse is not necessarily true.
Indeed, it is not yet known whether the bipartite graphs constructed in
\cite{MSS-FOCS15} can be turned into non-bipartite graphs.
Therefore it is of some interest to show that the bipartite constructions
of LPS and Gunnels can indeed be turned into non-bipartite graphs.

%However, the passage in the opposite direction is not guaranteed.
Suppose that there is a balanced bipartite graph with 
vertex sets $\V_r$ and $\V_c$ and edge set $\E \seq \V_r \times \V_c$.
In order to convert this bipartite graph into a nonbipartite graph,
it is necessary and sufficient to find a one-to-one and onto map
$\pi: \V_c \ap \V_r$ (which is basically a permutation),
such that whenever there is an edge $(v_i,v_j)$ in
the bipartite graph, there is also an edge $(\pi^{-1}(v_j),\pi(v_i))$.
In this way, the ``right'' vertex set can be identified with its image
under $\pi$ and the result would be an undirected nonbipartite graph.
Moreover, it is easy to show that, if $B$ is the biadjacency matrix
of the original graph, and $A$ is the adjacency matrix of the nonbipartite
graph, then $A = B \Pi$ where $\Pi$ is the matrix representation of $\pi$.
Thus the eigenvalues of $A$ are the singular values of $B$, which implies that
if the bipartite graph has the Ramanujan property, so does the nonbipartite
graph.

With this background, in the present section we first remark that,
when $\Leg{p}{q} = 1$, the two connected components of the LPS construction
are isomorphic.
Indeed, choose any $A \in \PGq$ such that ${\rm det}(A) \in S_{-1,q}$,
and define the map $\pi : \PSq \ap \PScq$
via $X \mapsto AX$.
Then the edge incidence is preserved.

Next we show that, when $\Leg{p}{q} = -1$, the resulting LPS
bipartite graph can be mapped into a (non-bipartite) Ramanujan graph.
For this purpose we establish a preliminary result.

\begin{lemma}\label{lemma:41}
There exists a matrix $A \in GL2(2,\Fq)$ such that $A^2 \sim I_{2 \times 2}$.
\end{lemma}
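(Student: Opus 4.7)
The plan is to exhibit a concrete such matrix explicitly. Since the equivalence relation $\sim$ on $GL(2,\Fq)$ identifies $A^2$ with $I_{2\times 2}$ precisely when $A^2 = \beta I_{2\times 2}$ for some $\beta \in \Fq^*$, it suffices to produce $A \in GL(2,\Fq)$ with $\det(A) \in S_{-1,q}$ and $A^2$ a nonzero scalar multiple of the identity.

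First I would consider the natural anti-diagonal family
\[
A_c = \left[\begin{array}{cc} 0 & 1 \\ c & 0 \end{array}\right], \qquad c \in \Fq^*.
\]
A direct computation shows $A_c^2 = c\, I_{2\times 2}$, so $A_c^2 \sim I_{2\times 2}$ automatically, and $\det(A_c) = -c$. Hence the only remaining requirement is that $-c$ lie in $S_{-1,q}$, i.e.\ that $-c$ be a quadratic non-residue modulo $q$.

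Here I would invoke the LPS hypothesis $q \equiv 1 \mod 4$, which by the standard supplement to quadratic reciprocity gives $\Leg{-1}{q} = 1$, so $-1 \in S_{1,q}$. By the multiplicativity of the Legendre symbol,
\[
\Leg{-c}{q} = \Leg{-1}{q}\Leg{c}{q} = \Leg{c}{q},
\]
so $-c \in S_{-1,q}$ if and only if $c \in S_{-1,q}$. Since $|S_{-1,q}| = (q-1)/2 \geq 2$ for any admissible $q \geq 5$, we can simply choose any quadratic non-residue $c \in \Fq^*$; then $A_c \in GL2(2,\Fq)$ and $A_c^2 \sim I_{2\times 2}$, completing the proof.

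There is no real obstacle here; the only subtlety is making sure the equivalence relation is interpreted correctly (scalar multiples of the identity, not literally the identity matrix) and that one uses the LPS hypothesis $q \equiv 1 \mod 4$ to flip the non-residue condition on $-c$ to a non-residue condition on $c$. Both are routine.
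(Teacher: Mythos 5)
Your proof is correct and takes essentially the same route as the paper: both arguments exhibit a trace-zero matrix, which by Cayley--Hamilton satisfies $A^2 = -\det(A)\, I$, and then arrange for $\det(A)$ to be a quadratic non-residue; your $A_c$ is precisely the special case $\al = 0$, $\beta = 1$, $\g = c$ of the paper's family $\left[ \ba{cc} \al & \beta \\ \g & -\al \ea \right]$. The only cosmetic difference is that you invoke $q \equiv 1 \bmod 4$ to convert the condition on $-c$ into one on $c$, whereas the paper sidesteps this by directly engineering $-(\al^2+\beta\g)$ to equal a prescribed non-residue $\d$; both are valid.
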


\begin{proof}
Choose elements $\al,\beta,\g \in \Fq$ such that $-(\al^2 + \beta \g) \in 
S_{-1,q}$.
This is easy: Choose an arbitrary $\d \in S_{-1,q}$, arbitrary $\al \in \Fq$,
$\g = -1$, and $\beta = \al^2 + \d$.
Now define 
\be\label{eq:41}
A = \left[ \ba{cc} \al & \beta \\ \g & -\al \ea \right] .
\ee
Then
\bd
A^2 = \left[ \ba{cc} \al & \beta \\ \g & -\al \ea \right]
\left[ \ba{cc} \al & \beta \\ \g & -\al \ea \right] =
\left[ \ba{cc} \al^2 + \beta \g & 0 \\ 0 & \al^2 + \beta \g \ea \right] .
\ed
Hence $A^2 \sim I$.
Clearly ${\rm det}(A) = -(\al^2 + \beta \g) \in S_{-1,q}$.
\end{proof}

\begin{theorem}\label{thm:42}
Suppose $\Leg{p}{q} = - 1$, and consider the bipartite Ramanujan graph
$\C(\PGq,S)$.
Then there exists a map $\pi : \PSq \ap \PScq$ that is one-to-one and onto
such that, whenever there is an edge $(X,Z)$ with $X \in \PSq$
and $Z \in \PScq$, there is also an edge $(\pi^{-1}(Z),\pi(X))$.
\end{theorem}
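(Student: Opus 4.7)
The plan is to define the permutation $\pi : \PSq \to \PScq$ by $\pi(X) = AX$, where $A$ is the matrix produced by Lemma \ref{lemma:41}. Recall that Lemma \ref{lemma:41} guarantees both $\det(A) \in S_{-1,q}$ and $A^2 \sim I$ in $\PGq$, so $A$ is an involution in $\PGq$ lying in the non-identity coset $\PScq$.

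First I would verify that $\pi$ is a well-defined bijection from $\PSq$ onto $\PScq$. Well-definedness under the equivalence relation on $GL(2,\Fq)$ is immediate, because left multiplication by $A$ commutes with scalar multiplication on the right. If $X \in \PSq$, then $\det(X) \in S_{1,q}$; combined with $\det(A) \in S_{-1,q}$ and the multiplicativity of the Legendre symbol, this gives $\det(AX) \in S_{-1,q}$, so $\pi(X) \in \PScq$. Writing $A^2 = \lambda I$ for some $\lambda \in \Fq^*$, one obtains $A^{-1} = \lambda^{-1} A$ and hence $\pi^{-1}(Z) = A^{-1} Z = AZ$ in $\PGq$; the symmetric determinant argument shows that $\pi^{-1}$ maps $\PScq$ back into $\PSq$.

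Next I would establish the edge-preserving property. An edge of $\C(\PGq, S)$ from $X \in \PSq$ to $Z \in \PScq$ has the form $Z = XM_j$ for some generator $M_j \in S$. Substituting $\pi^{-1}(Z) = AXM_j$ and $\pi(X) = AX$, the requirement that $(\pi^{-1}(Z), \pi(X))$ also be an edge becomes $AX = AXM_j M_k$ in $\PGq$ for some $M_k \in S$, which reduces to the single condition $M_j M_k \sim I$ in $\PGq$. The main obstacle is to show that the generating set $S$ is closed under taking inverses in $\PGq$; this is not a priori clear, because the literal matrix inverse of $M_j$ has determinant $p^{-1}$ rather than $p$ and need not satisfy \eqref{eq:31} at all.

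To overcome this, I would exhibit the required $M_k$ as the classical adjugate of $M_j$. A direct calculation shows that if $M_j$ is built from a quadruple $(a_0, a_1, a_2, a_3)$ solving \eqref{eq:31} according to \eqref{eq:32}, then its adjugate is
\[
M_j^* = \left[\ba{cc} a_0 - \i a_1 & -a_2 - \i a_3 \\ a_2 - \i a_3 & a_0 + \i a_1 \ea\right],
\]
which is precisely the generator matrix in \eqref{eq:32} associated with the quadruple $(a_0, -a_1, -a_2, -a_3)$. Since $a_0$ remains positive and odd while $-a_1, -a_2, -a_3$ remain even, this new quadruple also solves \eqref{eq:31}, so $M_j^* \in S$. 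Because $M_j M_j^* = \det(M_j)\, I = pI \sim I$ in $\PGq$, taking $M_k = M_j^*$ completes the verification.
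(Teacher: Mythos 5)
Your proposal is correct and follows essentially the same route as the paper: define $\pi(X) = AX$ with $A$ from Lemma \ref{lemma:41}, use $A^2 \sim I$ to get $\pi^{-1}(Z) = AZ$, and reduce the edge condition to the closure of $S$ under inverses in $\PGq$, which the paper also obtains from the sign-flipped quadruple $(a_0,-a_1,-a_2,-a_3)$ (your adjugate computation just makes that step explicit). No gaps.
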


\begin{proof}
For a matrix $\Xb \in GL(2,\Fq)$, let $[\Xb] \in \PGq$ denote its
equivalence class under $\sim$.
Construct the matrix as in Lemma \ref{lemma:41}.
Suppose there exists an edge $(X,Z)$ with $X \in \PSq$ and $Z \in \PScq$.
Thus there exist representatives $\Xb \in GL1(2,\Fq), \Zb \in GL2(2,\Fq)$
and an index $i \in [p+1]$ such that $\Zb \sim \Xb M_i$.
Now observe that if $(a_0,a_1,a_2,a_3)$ solves \eqref{eq:31}, then 
$(a_0,-a_1,-a_2,-a_3)$ also solves \eqref{eq:31}.
By examining the definition of the matrices $M_j$ in \eqref{eq:32},
it is clear that for every index $i \in [p+1]$, there exists another
index $j \in [p+1]$ such that $M_i M_j \sim I$.
Now define $\Yb = A \Zb$ and note that $\Yb \sim A^{-1} \Zb$ because
$A^2 \sim I$.
Also, $\Yb \in GL1(2,\Fq)$ because $\Zb \in GL2(2,\Fq)$ and
${\rm det}(A) \in S_{-1,q}$.
By assumption $\Zb \sim \Xb M_i$.
So $A \Zb \sim A \Xb M_i$.
Now choose the index $j \in [p+1]$ such that $M_i M_j \sim I$.
Then 
\bd
\Yb M_j = A \Zb M_j \sim A \Xb M_i M_j \sim A \Xb .
\ed
Hence there is an edge from $[ \Yb ] = \A^{-1}(Z)$ to $[ A \Xb ] = \A(X)$.
\end{proof}

% \subsection{Further Analysis of the Gunnells Construction}\label{ssec:43}

\begin{theorem}\label{thm:43}
The Gunnells construction can be converted into a nonbipartite Ramanujan
graph of degree $d = \nu(l-1,q)$ and $n = \nu(l,q)$ vertices.
\end{theorem}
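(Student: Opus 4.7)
The plan is to apply the general bipartite-to-nonbipartite recipe articulated in the paragraph preceding Theorem~\ref{thm:42}: it suffices to exhibit a bijection $\pi:\V_1\to\V_{l-1}$ such that every incidence $(S,T)$ of the Gunnells graph (with $S\in\V_1$, $T\in\V_{l-1}$, $S\seq T$) also produces the ``reversed'' incidence $(\pi^{-1}(T),\pi(S))$. The natural candidate is the orthogonal-complement duality on $\Fbb_{q^l}$, viewed as an $\Fq$-vector space.

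Concretely, fix any nondegenerate symmetric $\Fq$-bilinear form $\Phi$ on $\Fbb_{q^l}$ --- for instance the trace form $\Phi(x,y):=\tr_{\Fbb_{q^l}/\Fq}(xy)$, or the coordinate dot product after choosing an $\Fq$-basis. For $W\seq\Fbb_{q^l}$, set
\[
W^{\perp}:=\{v\in\Fbb_{q^l}:\Phi(v,w)=0\text{ for all }w\in W\}.
\]
Standard facts about nondegenerate forms give (a) $\dim W^\perp=l-\dim W$, (b) $(W^\perp)^\perp=W$, and (c) $W_1\seq W_2\iff W_2^\perp\seq W_1^\perp$. I then define $\pi:\V_1\to\V_{l-1}$ by $\pi(S):=S^\perp$. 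Property (a) makes $\pi$ well-defined, (b) makes it a bijection with $\pi^{-1}(T)=T^\perp$, and (c) yields $S\seq T\Longrightarrow\pi^{-1}(T)=T^\perp\seq S^\perp=\pi(S)$, which is exactly the required symmetry condition.

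With $\pi$ in hand, the construction preceding Theorem~\ref{thm:42} turns the Gunnells bigraph into an undirected graph on $n=\nu(l,q)$ vertices whose adjacency matrix is $A=M\Pi$, where $M$ is the Gunnells biadjacency matrix and $\Pi$ is the permutation matrix of $\pi$. The incidence-preservation condition guarantees that $A$ is symmetric, hence $|\l_i(A)|=\s_i(M)$ for each $i$. By Theorem~\ref{thm:Gunn}, $\s_1(M)=\nu(l-1,q)=:d$ (matching the regularity of the new graph) and every other singular value of $M$ has magnitude $\sqrt{q^{l-2}}$. Since $d-1=q+q^2+\cdots+q^{l-2}\geq q^{l-2}$ for $l\geq 3$, one has $|\l_2(A)|\leq\sqrt{q^{l-2}}\leq 2\sqrt{d-1}$, establishing the Ramanujan bound.

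I expect no serious obstacle here: the orthogonal complement is essentially forced by the observation that Gunnells' incidence relation is subspace containment, which is reversed precisely by any duality arising from a nondegenerate bilinear form. All remaining steps are linear-algebraic bookkeeping together with the spectral estimate already supplied by Theorem~\ref{thm:Gunn}.
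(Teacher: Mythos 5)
Your proposal is correct and follows essentially the same route as the paper: the paper also defines $\pi$ via the annihilator $\T^\perp$ (using the coordinate dot product on $\Fq^l$) and invokes the containment-reversal $\S \seq \T \Rightarrow \T^\perp \seq \S^\perp$ to get the required symmetric incidence. Your added verification of the spectral bound via Theorem~\ref{thm:Gunn} is a detail the paper delegates to its general discussion of $A = B\Pi$ preceding Theorem~\ref{thm:42}, but it is the same argument.
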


\begin{proof}
As before, it suffices to find a one-to-one and onto map $\pi$ from $\V_{l-1}$
to $\V_1$ such that, if there is an edge $(\S,\T)$ where $S \in \V_1$
and $\T \in \V_{l-1}$, then there is also an edge $(\pi(\T),\pi^{-1}(\S))$.
Accordingly, if $\T \in \V_{l-1}$, so that $\T$ is a subspace of codimension 
one, define $\pi(\T)$ to be an ``annihilator'' $\T^\perp$
of $\T$, consisting of all
vectors $v \in \Fq^l$ such that $v^\top u = 0$ for every $u \in \T$.
Then $\T^\perp$ is a one-dimensional subspace of $\Fq^l$ and thus belongs
to $\V_1$.
Moreover, for $\S \in \V_1$, we have that $\pi^{-1}(\S) = \S^\perp$.
It is obvious that this map is one-to-one and onto.
Now suppose there is an edge $(\S,\T)$ where $S \in \V_1$ and $\T \in \V_{l-1}$.
This is the case if and only if $\S \seq \T$.
But this implies that $\T^\perp \seq \S^\perp$.
Hence there is an edge from $\pi(\T)$ to $\pi^{-1}(\S)$.
\end{proof}

\section{Computational Aspects of Various Constructions}\label{sec:Asp}

In this section we discuss some of the implementation details of constructing
Ramanujan graphs using the various methods discussed in Section 
\ref{sec:Rama-G}.
The authors have written \texttt{Matlab} codes for all of these implementations
except the Winnie Li construction,
which can be made available upon request.
One of the objectives of this discussion is to compare and contrast the
amount of work involved in actually constructing the various classes of graphs.

\textbf{Our New Construction:}
We begin by noting
that the biadjacency matrix of the Ramanujan bigraphs presented in
Section \ref{sec:Rama-B} is quite explicit, and does not require any work:
One simply specifies a prime number $q$ and an integer $l \geq 2$, and
the biadjacency matrix is obtained at once from \eqref{eq:51}.
We now discuss the remaining constructions, in order of increasing
complexity of implementation.

For the Bibak construction, we simply enumerate all vectors
in $\Zbb_q^2$, compute all of their norms, and identify the elements of
norm one.
Again, for $q \leq 103$, this works quite well.

\textbf{Gunnells Construction:}
For the Gunnells construction, the vertex set is the set of one-dimensional
subspaces of $\Fbb_{q^l}$.
For this purpose we identify $\Fbb_{q^l}$ with $\Fq^l$, the set of 
$l$-dimensional vectors over $\Fq$.
To enumerate these, observe that there are $q^l-1$ nonzero vectors in $\Fq^l$.
For any nonzero vector, there are $q-1$ nonzero multiples of it, but all
these multiples generate the same subspace.
Therefore the number of one-dimensional subspaces is
\bd
\frac{q^l-1}{q-1} = \sum_{i=0}^{l-1} q^i .
\ed
To enumerate these subspaces without duplications, we proceed as follows:
In step 1, fix the first element of a vector $x \in \Fq^l$ to $1$,
and let the elements $x_2$ through $x_l$ be arbitrary.
This generates $q^{l-1}$ nonzero vectors that generate distinct subspaces.
In step 2, fix $x_1 = 0$, $x_2 = 1$ and $x_3$ through $x_l$ be arbitrary.
This generates $q^{l-2}$ nonzero vectors that generate distinct subspaces.
And so on.
To construct the edge set, suppose $x,y$ are two nonzero generating
vectors defined as above (which could be equal).
Then the one-dimensional subspace generated by $x$ is \textit{contained in}
the one-dimensional subspace annihilated by $y$ if and only if
$y^\top x \equiv 0 \mod q$.

\textbf{LPS Contruction:}
In order to implement this construction, it is desirable to have systematic
enumerations of the projective groups $\PGq$, $\PSq$, and $\PScq$.
The approach used by us is given next.
Define
\bd
\M_1 = \left\{ \left[ \ba{cc} 0 & 1 \\ g & h \ea \right] : g \in \Fqs,
h \in \Fq \right\} ,
\ed
\bd
\M_2 = \left\{ \left[ \ba{cc} 1 & f \\ g & h \ea \right] : f , g \in \Fqs,
h - fg \in \Fqs \right\} .
\ed
Then every matrix in
$GL(2,\Fq)$
% $\PGq$
is equivalent under $\sim$ to exactly one element
of $\PGq$.
Specifically, let $A \in GL(2,\Fq)$ be arbitrary.
If $a_{11} = 0$, then $A \sim (a_{12})^{-1} A \in \M_1$, whereas if
$a_{11} \neq 0$, then $A \sim (a_{11})^{-1} A \in \M_2$.
The rest of the details are easy and left to the reader.
This provides an enumeration of $\PGq$.
To provide an enumeration of $\PSq$, we modify the sets as follows:
\bd
\M_{1,1} = \left\{ \left[ \ba{cc} 0 & 1 \\ g & h \ea \right] : g \in S_{1,q},
h \in \Fq \right\} ,
\ed
\bd
\M_{2,1} = \left\{ \left[ \ba{cc} 1 & f \\ g & h \ea \right] : f , g \in \Fqs,
h - fg \in S_{1,q} \right\} ,
\ed
\bd
\M_{1,-1} = \left\{ \left[ \ba{cc} 0 & 1 \\ g & h \ea \right] : g \in S_{-1,q},
h \in \Fq \right\} ,
\ed
\bd
\M_{2,-1} = \left\{ \left[ \ba{cc} 1 & f \\ g & h \ea \right] : f , g \in \Fqs,
h - fg \in S_{-1,q} \right\} .
\ed
Then the set of matrices $\M_{1,1} \cup \M_{2,1}$ provides an enumeration of
$\PSq$, while the set of matrices $\M_{1,-1} \cup \M_{2,-1}$ provides an
enumeration of $\PScq$.
Once the vertex sets are enumerated, each representative matrix
of an element in $\PGq$ is multiplied by each generator matrix $M_1$
through $M_{p+1}$, \textit{and then converted to one of the above
representations}, depending on whether the $(1,1)$-element is zero or nonzero.

In the original LPS construction, it is assumed that $p < q$.
However, the LPS construction can still be used with $p > q$,\footnote{We
are grateful to Prof.\ Alex Lubotzky for clarifying this point.}
provided that the $p+1$ generating matrices $M_1$ through $M_{p+1}$
are distinct elements of $\Fq^{2 \times 2}$.
In our implementation, we handle the case $p > q$ by verifying whether this 
is indeed the case.

\textbf{Winnie Li Construction:}
% The Winnie Li 
As mentioned earlier, the fact that the Winnie Li construction leads
to a Ramanujan graph is based on a deep result of \cite{Deligne74}.
% The above result presents only an abstract construction, and 
However, it is still
a challenge to construct this graph explicitly.
The main difficulty is that as yet there is no polynomial-time algorithm
to find all elements of the generator set $S$ of a finite field $\Fbb_q$.
Suppose that $q = p^r$ where $p$ is a prime number and $r$ is an integer.
Then the best available algorithm \cite{Shparlinski96} returns
\textit{one} primitive element of the field $\Fbb_q$ in time
$O(p^{(r/4)+\e})$ for arbitrarily small $\e$.
Recall that, for an algorithm for integer computations to be considered
as ``polynomial-time,'' its running time has to be $O(\lceil \log_2 q \rceil)$.
Thus, for the moment there are no efficient algorithms for implementing the
Winnie Li construction.
However, for relatively small values of $q$ (say $101$ or less), 
it is still feasible to execute ``nonpolynomial'' algorithms, basically
just enumeration of all possibilities.
The approach adopted by us to implement this construction is described next.
% is described in Section \ref{sec:Asp}.

This construction requires more elaborate computation.
The main source of difficulty is that as of now there is no
polynomial-time algorithm for finding a primitive element in a finite field.
The best known algorithm to date is due to \cite{Shparlinski96},
which finds a primitive element of $\Fbb_{p^l}$ in time $O(p^{(l/4)+\e})$.
Thus, for really large primes $q$, the Li construction would be difficult
to implement.
At present, ``realistic'' matrix completion problems such as the Netflix
problem could potentially have millions of columns.
However, many if not most practical examples are of size $10^4 \times 10^4$.
Since the graphs in the Li construction have $q^2$ vertices, it is reasonable
to focus on the much narrower problem of finding a primitive element in
the field $\Fq$ when $q$ is a prime $\leq 101$.
For this purpose we follow a simple enumeration procedure.

\ben
\item
We identify an irreducible polynomial $\phi(x)$ of degree $2$ in $\Fq[x]$.
For instance, if $q \equiv 5 \mod 8$, then $x^2 + 2$ is irreducible. 
In particular, $x^2 + 2$ is an irreducible polynomial in $\Fbb_{13}[x]$.
Other irreducible polynomials are known for other classes
of prime numbers.
In the worst case, an irreducible polynomial can be found by enumerating all
polynomials of degree $2$ in $\Fq[x]$, and then deleting all products
of the form $(x-a)(x-b)$ where $a,b \in \Fq$.
This would leave $(q(q-1))/2$ irreducible polynomials, but we can use
any one of them.
Note that the worst-case complexity of this approach is $O(q^2)$.
\item Once this is done, $\Fbb_{q^2}$ is isomorphic to the quotient field
$\Fq[x]/(\phi(x))$.
Let us denote $\overline{x} = x + (\phi(x))$.
Then the elements of $\Fbb_{q^2}$ can be expressed as
\bd
a \overline{x} + b, \, a, \, b \in \Fbb_{q}.
\ed
\item To determine the set of units $S$ in the field $\Fbb_{q^2}$,
we have to find all elements $\alpha$
in $\Fbb_{q^2}^*$ such that $\alpha^{q+1} = 1$.
Now, if $\beta$ is a primitive element of the $\Fbb_{q^2}^*$, then
\bd
\Fbb_{q^2}^* =  \{\beta^{m}, \, 1 \leq m \leq q^2-1\}.
\ed
Among the elements of $\Fbb_{q^2}^*$,
the units are precisely the roots of $\alpha^{q+1} = 1$.
With the above enumeration, we get
\bd
S = \{\beta^{k(q-1)},\,1 \leq k \leq q+1\}.
\ed
\item We now enumerate all elements in the multiplicative group $\Fbb_{q^2}$
and test each element to see if it is a primitive element.
To ensure that $g \in \Fbb_{q^2}^*$ is a primitive element,
it is sufficient to check that
$g^{\frac{q^2 - 1}{l}} \neq 1$ for any prime divisor $l$ of $q^2 - 1$.
Because $q^2 - 1 = (q-1)(q+1)$, the prime divisors of $q^2-1$ are $2$
and the other prime divisors of $q-1$ and $q+1$.
For example, with $q = 13$, the prime divisors of $q^2-1$ are $l = 2, 3, 7$,
with the corresponding values $k = (q^2-1)/l = 84, 56, 24$.
Therefore if $g^k \neq 1$ for these values of $k$, then $g$ is a primitive
element.
There are also efficient ways to raise a field element to
large powers, for example, by expanding the power to the base $2$.
One can check that $g = \overline{x} + 1$ is a primitive element of the field
$\Fbb_{13^2}$.
\item Once a primitive element $g \in \Fbb_{q^2}$ is determined,
the generator set $S$ equals $\{ g^{(q-1)k} , k \in [q+1] \}$.
Thus we can construct the Cayley graph $\C(\Fbb_{q^2},S)$ as follows:
Each vertex $a\overline{x} + b$ is connected to the vertices
% in the graph $X(\Fbb_{{13}^2}, S)$ is adjacent to vertices
\bd
\{a\overline{x} + b + g^{(q-1)k}, k \in [q+1] \}.
\ed
In this way we can construct a $(q+1)$-regular Ramanujan graph with $q^2$
vertices.
\een
Note that, in the worst case, the Winnie Li construction \textit{cannot be
implemented by any poly-time algoritheorem}.
The above are just some shortcuts that may or may not work for specific
values of $q$.

\section{Construction of Ramanujan Graphs with ``Prohibited''
Edges}\label{sec:Pert}

\subsection{Problem Statement and Motivation}\label{ssec:71}

Suppose we are given vertex sets $\V_r, \V_c$, where $|\V_r|$ need not
equal $|\V_c|$.
Denote $|\V_r| = n_r, |\V_c| = n_c$.
We are also given degrees
$d_r, d_c$, as well as a set $\M \seq \V_r
\times \V_c$, known as \textbf{the set of prohibited edges}.
The objective is to construct a bipartite Ramanujan graph of degrees
$\drc$, but with the added restriction that the edge set $\E$ must be
disjoint from $\M$.

The motivation for this problem arises from an engineering
application known as ```matrix completion with missing measurements.''
The conventional matrix completion problem can be stated as follows:
Suppose $X \in \Rno$ is a matrix that is unknown other than a known
upper bound $r$ on its rank.
The learner is able to choose a ``sample set'' $\OM \seq [n_r] \times [n_c]$,
and can measure $X_{ij}$ for all $(i,j) \in \OM$.
From these measurements, and the information that $\rk(X) \leq r$,
the learner aspires to determine the unknown matrix $X$ exactly.
The most widely used approach, introduced in \cite{Candes-Recht08,RFP10},
is the following:
Let $\nmN{X}$ denote the \textbf{nuclear norm} of a matrix, namely the
sum of its singular values.
Given the measurement set $X_{ij}, (i,j) \in \OM$, define
\be\label{eq:71}
\Xh := \argmin_{Z \in \Rno} \nmN{Z} \st Z_{ij} = X_{ij} \fa (i,j) \in \OM .
\ee
Thus one finds the matrix $\Xh$ that has minimum nuclear norm 
while being consistent with the measurements.
The problem \eqref{eq:71} is a convex optimization problem subject to
linear constraints; so it can be solved efficiently even for relatively
large-sized matrices.
However, the key question is: Is the solution $\Xh$ of \eqref{eq:71} equal
to the unknown matrix $X$?
To put it another way, can an unknown matrix of low rank be recovered
by sampling a few entries and then carrying out nuclear norm minimization?

It is clear that the choice of the sample set $\OM$ plays a crucial role
in any analysis of the matrix completion problem.
In almost all papers, starting with \cite{Candes-Recht08}, it is assumed
that $\OM$ is determined by sampling the elements of $X$ uniformly and at
random.
Then it is shown that, if the number of samples $|\OM|$ is sufficiently
large, then with high probability $\Xh$ does indeed equal $X$.
A significant departure is made in a recent paper by a subset of the
present authors \cite{Shantanu-TSP20}, where the sample set $\OM$ is
chosen in a \textit{deterministic} fashion,
as the edge set of a Ramanujan bigraph.
In \cite{Shantanu-TSP20} a sufficient condition is derived for
the solution of \eqref{eq:71} to equal $X$.
%This sufficient condition is reprised in the appendix.
% Morever, numerical simulations are carried to determine how close this
% sufficient condition is to being necessary.
% (Not very, as it turns out.)
% To avoid interrupting the flow of the present paper, the sufficient
% condition from \cite{Shantanu-TSP20} is presented in an appendix.

While \cite{Shantanu-TSP20} gives the first deterministic
solution to the matrix completion problem, it does not address
the issue of ``missing measurements.''
In any real-life application of matrix completion, it is a fact that
\textit{a few}
elements of the matrix $X$ are ``missing'' and thus cannot be measured.
This is the set $\M \seq [n_r] \times [n_c]$.
% If the approach suggested in \cite{Shantanu-TSP20} requires us to
Therefore, in applying nuclear norm minimization, it is essential to
ensure that the ``sample set'' $\OM$ is disjoint from the missing
measurements, that is, the set $\M$ of prohibited edges.
If the elements of $X$ are sampled at random, as suggested
in \cite{Candes-Recht08,RFP10},
then one can simply omit the elements of $\M$ while sampling.
However, if a Ramanujan graph is constructed using one of the methods
outlined in earlier sections, it is not possible to guarantee
\textit{a priori} that the resulting edge set $\OM$ and the
prohibited edge set $\M$ will be disjoint.

This now brings us to a formal statement of the missing measurement problem in
Ramanujan bigraphs.
Suppose that we have constructed a $\drc$-biregular Ramanujan bigraph
with $n_r, n_c$ verticies.
This graph can be described in one of two equivalent ways:
First, its biadjacency matrix $E \in \bi^{n_r \times n_c}$,
and second, its edge set $\OM$ which consists of all $(i,j)$ such that
$E_{ij} = 1$.
It is tacitly understood that the construction might impose some constraints
on the four numbers $d_r, d_c, n_r , n_c$.
Next, suppose that the set of prohibited edges
$\M \seq [n_r] \times [n_c]$ is also specified.
In applications, the set $\M$ has quite small cardinality.
If $\OM$ and $\M$ are disjoint sets, then there is nothing to be done.
On the other hand, if $\OM$ and $\M$ have some overlap,
the question is: Is it possible to ``perturb'' the biadjacency matrix $E$
in such a manner that the corresponding graph has no edges in the set $\M$,
and is still a $\drc$-biregular Ramanujan bigraph?
Note that we could have phrased the problem differently:
Suppose integers $d_r, d_c, n_r, n_c$ and a set $\M \seq [n_r] \times [n_c]$
are specified.
Is it possible to construct a $\drc$-biregular Ramanujan graph such that
no element of $\M$ is an edge?
Our method of framing the question suggests our proposed approach:
We first construct a Ramanujan bigraph without worrying about the set $\M$,
and \textit{afterwards} worry about ``perturbing'' the Ramanujan bigraph
so as to eliminate the edges in $\M$ (if any) and replace them by others,
while still retaining biregularity and the Ramanujan property.
Since the underlying assumption (motivated by real-life applications)
is that the set $\M$ is quite small, this approach makes sense.

\subsection{Perturbations of Ramanujan Graphs}\label{sec:72}

% Since the original bigraph is $\drc$-biregular with biadjacency matrix
% $E$, our approach is to perturb it to $E - \D_- + \D_+$,
% where $\D_- , \D_+ \in \bi^{n_r \times n_c}$ such that
% the perturbed matrix is also $\drc$-biregular, and still corresponds
% to a Ramanujan graph.
% This now raises the question of how to choose the perturbation matrices
% $\D_-$ and $\D_+$.
% Suppose $E$ is the biadjacency matrix of a $\drc$-biregular graph, and
% let $\OM$ denote its edge set.
% Further, let
Let
\bd
\M = \{(i_1,j_1),\,(i_2,j_2),\dots \,(i_s,j_s)\} \seq [n_r] \times [n_c]
\ed
denote the set of prohibited edges.
% If $A \in \bi^{n_r \times n_c}$, then we define its support $\supp(A)$
% in the natural manner, that is, the set of indices $(i,j)$ at which
% $A_{ij} = 1$.
Now suppose we have constructed a $\drc$-biregular
Ramanujan graph with biadjacency matrix $E$ and edge set $\OM$.
If $\OM \cap \M = \es$, then there is nothing for us to do.
If on the other hand the intersection is nonempty, then we modify $E$
to another matrix $E_p = E + \D_+ - \D_-$ where
$\D_- , \D_+ \in \bi^{n_r \times n_c}$ such that $E_p \in \bi^{n_r \times n_c}$,
and also corresponds to a $\drc$-biregular Ramanujan graph.
There are two distinct issues here: Ensuring that $E_p \in
\bi^{n_r \times n_c}$ and is $\drc$-biregular,
and ensuring that $E_p$ represents a Ramanujan graph.
The two questions are treated sequentially.
We begin with perturbing the graph to ensure $\drc$-biregularity while
avoiding prohibited edges.

To recapitulate, given a matrix $E$ and corresponding edge set $\OM$,
the problem is to choose $\D_- , \D_+ \in \bi^{n_r \times n_c}$ such that
% $E_p \in \bi^{n_r \times n_c}$ and corresponds to
% a $\drc$-biregular Ramanujan graphin such way that
$E_P := E + \D_+ - \D_-$
is also $\drc$-biregular while avoiding prohibited edges.
This requires, among other things, that
\be\label{eq:71a}
\supp(\D_-) \seq \OM , \mbox{ and } \OM \cap \M \seq \supp(\D_-) ,
\ee
and 
\be\label{eq:71b}
\supp(\D_+) \cap \OM = \es, \mbox{ and } \supp(\D_+) \cap \M = \es .
\ee
Here, \eqref{eq:71a} ensures that $\D_-$ is a submatrix of $E$, and that by 
subtracting the edges in $\supp(\D_-)$ from $\OM$, we eliminate all
prohibited edges.
Note that we \textit{do not insist} that $\OM \cap \M = \supp(\D_-)$.
In other words, in the process of eliminating some prohibited edges,
we may perhaps also remove other edges that are not prohibited.
Clearly this does not matter.
Along similar lines, \eqref{eq:71b} ensures that only new edges are added,
and that these edges are not prohibited.
% Thus we must choose matrices $\D_-$ and $\D_+$ in $\bi^{n_r \times n_c}$
% such thst $E - \D_- + \D_+$ 
% % Note that, at a minimum,
% the matrix $\D_-$ consists of those elements in the original matrix
% $E$ that need to be set to zero, because they belong to $\M$.
% However we are free to zero-out additional elements if that is beneficial.
% Moreover, the choice of the elements in $\D_+$, which is the set of
% elements of $E$ that are zero and are to be changed to $1$, is 
% \textit{completey} up to us.

Lemma \ref{lemma:72} presented next gives a very simple sufficient condition and
a recipe for choosing $\D_-,\D_+$ provided that $\M$ satisfies certain conditions. 
It can definitely be improved, but in the case of matrix completion with
missing measurements, it is good enough.

\begin{lemma}\label{lemma:72}
% Suppose that $\M$ has no more than $p$ entries of $1$ in each row
% and each column.
Suppose that, for each index $i \in [n_r]$, the set
	$\{ j \in [n_c]: (i,j) \in \M \}$ has cardinality no more than $p$,
	and similarly, for each index $j \in [n_c]$, the set
	$\{ i \in [n_r]: (i,j) \in \M \}$ has cardinality no more than $p$.
Let $\th_c$ denote the maximum inner product between any two distinct
columns of $E$.
% With these conventions, 
Suppose it is the case that
\be\label{eq:79}
2p \leq n_c - d_r ,
\ee
\be\label{eq:710}
2 p - 1 \leq d_c - \th_c.
\ee
Then we can choose $\D_-, \D_+ \in \bi^{n_r \times n_c}$ such that
\eqref{eq:71a} and \eqref{eq:71b} hold.
\end{lemma}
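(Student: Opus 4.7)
My plan is to construct $\D_-$ and $\D_+$ by applying iterative $2\times 2$ swaps to the biadjacency matrix. Initialize $E_P := E$; while $\supp(E_P) \cap \M \neq \es$, pick any prohibited edge $(i,j) \in \supp(E_P) \cap \M$, locate a partner edge $(i',j') \in \supp(E_P)$ with $i' \neq i$, $j' \neq j$, such that the two ``cross'' positions $(i,j')$ and $(i',j)$ are both zero in $E_P$ and both outside $\M$, and replace the edges $(i,j),(i',j')$ by $(i,j'),(i',j)$ in $E_P$. Such a swap keeps $E_P$ a $0/1$ matrix, preserves every row and column sum (so $(d_r,d_c)$-biregularity is maintained throughout), and strictly decreases $|\supp(E_P) \cap \M|$. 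At termination, setting $\D_-$ and $\D_+$ to the negative and positive parts of $E - E_P$ yields matrices that satisfy \eqref{eq:71a} and \eqref{eq:71b} by construction.

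The substance of the proof is to show that a valid partner exists at every step. For the candidate columns $j'$: row $i$ of $E_P$ has $d_r$ ones (including column $j$) and at most $p$ prohibited entries (also including column $j$), so by inclusion--exclusion at least $n_c - d_r - p + 1 \geq p+1$ columns $j' \neq j$ satisfy both $(E_P)_{i,j'} = 0$ and $(i,j') \notin \M$, using $2p \leq n_c - d_r$. For the candidate rows $i'$ given $j'$: among the $d_c$ rows with $(E_P)_{i',j'} = 1$ (none of which is $i$, since $(E_P)_{i,j'} = 0$), at most $\eta$ also satisfy $(E_P)_{i',j} = 1$, where $\eta$ denotes the current inner product of columns $j$ and $j'$ of $E_P$, and at most $p - 1$ others satisfy $(i',j) \in \M$ (the row $i$ is prohibited with $j$ but is already absent from the $d_c$-set). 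Hence there are at least $d_c - \eta - (p-1)$ valid rows $i'$.

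The main obstacle is that $\eta$ is only guaranteed to be at most $\th_c$ for the initial matrix $E$; since each $2\times 2$ swap can perturb off-diagonal entries of $E_P^\top E_P$ by $\pm 1$, the quantity $\eta$ may drift as the iteration proceeds. I plan to control this by exploiting the $\geq p+1$ freedom in the choice of $j'$: since each previous swap alters only a bounded number of off-diagonal entries of $E_P^\top E_P$, each by at most one, a pigeonhole/counting argument over the $\geq p+1$ candidates produces at least one $j'$ whose current inner product with column $j$ is at most $\th_c + (p-1)$. Substituting into the preceding bound leaves at least $d_c - \th_c - 2(p-1) \geq 1$ valid rows $i'$, matching precisely the hypothesis $2p - 1 \leq d_c - \th_c$. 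Since $|\supp(E_P) \cap \M|$ strictly decreases at each step, the procedure terminates after finitely many swaps, completing the proof.
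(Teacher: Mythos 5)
Your construction is essentially the same as the paper's: remove each offending edge $(i,j)$ by a $2\times 2$ swap along a rectangle $(i,j),(i,j'),(i',j),(i',j')$, with the two counting arguments — at least $n_c-d_r-(p-1)\geq p+1$ admissible replacement columns $j'$ by \eqref{eq:79}, and at least $d_c-\th_c-(p-1)\geq p$ admissible rows $i'$ by \eqref{eq:710} — matching the paper's choice of the sets $\{\jb_l\}$ and $\I(\jb_l)$ almost verbatim. So the route is not different; the question is whether your extra step closes the argument.

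That extra step is where the gap lies. You correctly observe that $\th_c$ bounds column inner products only for the \emph{original} $E$, and that after earlier swaps the inner product $\IP{C_j}{C_{j'}}$ of the \emph{current} matrix may exceed $\th_c$. But your proposed repair — that a pigeonhole over the $\geq p+1$ candidate columns yields one whose current inner product with $C_j$ is at most $\th_c+(p-1)$ — is asserted, not proved, and I do not see how it can be proved from what you have. A single swap touching columns $j_0,j_0'$ changes up to $O(n_c)$ off-diagonal entries of $E_P^\top E_P$ (every entry $(j_0,k)$ and $(j_0',k)$ can move by one), the total number of swaps is of order $p\cdot\min(n_r,n_c)$ rather than $O(p)$, and a given column can serve as the replacement column $j_0'$ for many different rows, so the drift of $\IP{C_j}{C_{j'}}$ is not bounded by $p-1$, nor is the aggregate drift over the $p+1$ candidates bounded by anything that makes the pigeonhole close. (The identity $\sum_{j'\neq j}\IP{C_j}{C_{j'}}=d_c(d_r-1)$ is swap-invariant, but the excess can concentrate entirely on the candidate set.) For what it is worth, the paper's own proof simply applies the bound $\IP{E_{j_l}}{E_{\jb_l}}\leq\th_c$ to the original matrix and processes rows sequentially without addressing this drift at all; you have identified a genuine subtlety that the published argument glosses over, but your fix as stated does not resolve it. To complete your version you would need either a quantitative bound on how much any single column inner product can grow over the whole procedure, or a selection rule (e.g., limiting how often a column may be reused as a replacement) that keeps the relevant inner products under control.
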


\begin{proof}
We introduce a few symbols to streamline the presentation.
In analogy with the notation introduced earlier, define
\bd
\supp(i) := \{ j \in [n_c] : E_{ij} = 1 \},
\ed
\bd
\supp(j) := \{ i \in [n_r] : E_{ij} = 1 \},
\ed

\be\label{eq:711}
\N_r(i) : = \{ j \in [n_c] : (i,j) \in \OM \cap \M \}, \fa i \in [n_r] .
\ee
\be\label{eq:712}
\N_c(j) : = \{ i \in [n_r] : (i,j) \in \OM \cap \M \} , \fa j \in [n_c] .
\ee

Order the elements of $[n_r],[n_c]$ in some fashion.
Ascending order is as good as any, and it will play no role in the proof
other than simplifying notation.
So we use that.
% Apply the procedure below, starting with $i = 1$.
% 
% The iterative policy is as follows:
Apply the procedure below for each $i \in [n_r]$, in sequential order.
% For the current row $i$, define
If the set $\N_r(i)$ is empty, move to the next row.
Suppose $\N_r(i)$ is nonempty, and define $p_i = | \N_r(i) |$.
Since $p_i \leq p$ for each $i$, we simplify notation by using $p$
instead of $p_i$.

Recall the definition of the set $\N_r(i)$.
Enumerate the set as $\{ (i,j_1) , \cdots , (i,j_p) \}$.
and observe that $E_{i,j_l} = 1$ for all $l$.
% Now we find other indices $\jb_2 , \cdots, \jb_p$ such that $E_{i,\jb_l} = 0$E
% if for $2 \leq l \leq p$.
% This is always possible if $n_c - d_r \geq p$.
% This is because
Note that row $i$ contains precisely $d_r$ columns with a $1$
(including of course $E_{ij_1}$), and therefore $n_c - d_r$ columns with zeros.
Among these, a maximum of $p$ can be such that $(i,j) \in \M$.
Therefore, if $n_c - d_r \geq 2p$, then we can find indices $\jb_l,
1 \leq l \leq p$ such that
\bd
E_{i,\jb_l} = 0, \mbox{ and } (i,\jb_l) \not\in \M , 1 \leq l \leq p .
\ed

Next,  fix a specific index $\jb_l$, and compare the columns
$E_{j_l}$, and $E_{\jb_l}$.
It is now shown that it is possible to find a subset
	$\I_{\jb_l} = \{ \ib_1 = i , \ib_2 , \cdots, i_p \} \seq [n_r]$
	such that
\be\label{eq:713}
	E_{\ib_k \jb_l} = 1 , k = 1 , \cdots, p ,
\ee
\be\label{eq:714}
	E_{\ib_k j_l} = 0 , k = 1 , \cdots, p .
\ee
\be\label{eq:715}
| \I_{\jb_l} | = p ,
\ee
\be\label{eq:716}
(\ib_k , j_l) \not\in \M , 1 \leq k \leq p .
\ee
To establish this, we observe that, by assumption, we have that
$\IP{E_{j_l}}{E_{\jb_l}} \leq \th_c$.
So, as a first cut, define $\I_1(\jb_l) = \supp(\jb_l)$.
This set consists of $d_c$ elements (and \textit{does not include} $i$).
Now examine the values $E_{i_k\jb_l}$.
By the inner product constraint, out of the $d_c$ nonzero elements of
$\I_1(\jb_l)$,
no more
than $\th_c$ can be nonzero (in column $j_1$).
Hence there are at least $d_c - \th_c$ elements in column $j_l$
such that $E_{i_kj_l} = 0$ and $E_{i_k \jb_l} = 1$.
Out of these, at most $p-1$ elements can have the property that
$(i_k,j_l) \in \M$.
Note that $E_{ij_1} = 0$, but $i$ is not
on this list.
Therefore, if $d_c - \th_c - (p - 1) \geq p$, that is, $d_c - \th_c \geq 2p - 1$,
then we can choose any $p$ elements from whatever remains of the set
$\I_1(\jb_l)$ and call it the set $\I(\jb_l)$.
Now the above-claimed properties follow by inspection.

To complete the proof, we remove each prohibited edge $E_{ij_l}$ as follows: We choose any one element from $\I(\jb_l)$, call it $\ib_l$ and simply ``flip'' the zeros and ones.
Thus we set
\be\label {eq:715a}
(E_p)_{ij_l} = 0 , (E_p)_{\ib_l j_l} = 1,
\ee
\be\label{eq:715b}
(E_p)_{\ib_l j_l} = 0 ,\,(E_p)_{i \jb_l} = 0.\ee
Since the set $\I(\jb_l)$ has $p$ elements for each $l$, we can ensure that the the vertices $\ib_l$ chosen for each $l$ are distinct.  
If we now write $E_p$ as $E - \D_- + \D_+$, it is evident that
the three claims in the lemma follow upon observation.
The process can be repeated for each row $i$.
\end{proof}

Now we present our result on perturbing a Ramanujan bigraph while
retaining the Ramanujan property.

\begin{theorem}\label{thm:71}
Suppose $E$ is the biadjacency matrix of a $\drc$-biregular Ramanujan bigraph,
and define
\be\label{eq:72}
\mu(d_r,d_c) := \sqrt{d_r-1} + \sqrt{d_c-1} .
\ee
Let $\s_2$ denote the second largest singular value of $E$.
Next, suppose the set $\M$ of prohibited edges does not contain more than
$p$ values of $j$ for each fixed $i$, and more than $p$ values of $i$
for each fixed $j$.
Finally, suppose that
\be\label{eq:75}
2p \leq \mu(d_r,d_c) - \s_2 .
\ee
Then the construction in Lemma \ref{lemma:72} results in a $\drc$-biregular
Ramanujan graph whose edge set is disjoint from the set of prohibited
edges $\M$.
\end{theorem}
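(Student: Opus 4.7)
The plan is to apply Lemma \ref{lemma:72} to obtain a biregular perturbation $E_p$ of $E$ whose edge set avoids $\M$, and then use a Weyl-type bound on singular values to certify that $E_p$ remains Ramanujan. First I would verify that the hypotheses \eqref{eq:79} and \eqref{eq:710} of Lemma \ref{lemma:72} are implied by \eqref{eq:75}. Writing $E^\top E = (d_r d_c / n_c)\, \oneb \oneb^\top + M$, where $\oneb/\sqrt{n_c}$ is the top singular vector of $E^\top E$ with eigenvalue $d_r d_c$, one obtains $\|M\|_2 \le \sigma_2^2$ and hence $\theta_c \le \sigma_2^2 + d_r d_c/n_c$; combining this with \eqref{eq:75} yields \eqref{eq:710} in the parameter regime of interest, while \eqref{eq:79} is a mild size constraint that holds whenever $d_r$ is bounded away from $n_c$.

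Once Lemma \ref{lemma:72} applies, it produces $E_p = E + \Delta_+ - \Delta_-$ whose associated bipartite graph is $(d_r, d_c)$-biregular with edge set disjoint from $\M$. Set $\Delta := E_p - E$. Inspection of the proof of Lemma \ref{lemma:72} shows that each prohibited edge $(i, j_\ell) \in \OM \cap \M$ is removed by a single $2 \times 2$ switching with auxiliary row $\bar i_\ell$ and column $\bar j_\ell$, contributing the rank-one block $(e_{i_\ell} - e_{\bar i_\ell})(e_{\bar j_\ell} - e_{j_\ell})^\top$ to $\Delta$. Thus $\Delta$ decomposes as a sum of such rank-one matrices, one per switching.

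The crux of the argument is to prove the spectral bound $\|\Delta\|_2 \le 2p$. My approach is to use $\|\Delta\|_2 \le \sqrt{\|\Delta\|_1 \, \|\Delta\|_\infty}$ after writing $\Delta = \Delta^+ - \Delta^-$ in its nonnegative and nonpositive parts, and to argue that each row and column of $\Delta^\pm$ contains at most $p$ nonzeros. The row bound on ``source'' switchings is immediate from the hypothesis that $\M$ has at most $p$ prohibited edges per row; a parallel bound on ``target'' switchings follows (possibly after a mild refinement of the target-selection rule in Lemma \ref{lemma:72}) from the distinctness conditions built into the proof of that lemma. Granting $\|\Delta\|_2 \le 2p$, Weyl's inequality for singular values yields $\sigma_2(E_p) \le \sigma_2(E) + \|\Delta\|_2 \le \sigma_2 + 2p \le \mu(d_r, d_c)$ by \eqref{eq:75}. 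Since biregularity of $E_p$ forces $\sigma_1(E_p) = \sqrt{d_r d_c}$, this is exactly the Ramanujan inequality \eqref{eq:23} for $E_p$.

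The main obstacle will be the spectral bound $\|\Delta\|_2 \le 2p$ itself: a crude triangle-inequality estimate over the individual rank-one switchings only yields $2 |\OM \cap \M|$, which is far too large. The delicate point is controlling the number of times any single row (or column) can serve as a ``target'' in switchings initiated from other rows, and it is here that the structural constraints from Lemma \ref{lemma:72} must be pushed, perhaps by modifying the selection procedure to distribute target choices more evenly across $[n_r]$ and $[n_c]$.
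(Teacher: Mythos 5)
Your proposal follows essentially the same route as the paper: invoke Lemma \ref{lemma:72} to get a biregular, $\M$-avoiding perturbation $E_p = E + \D_+ - \D_-$, bound $\nmS{\D_\pm} \leq p$ from the fact that each of $\D_\pm$ has at most $p$ ones in every row and column, and conclude $\s_2(E_p) \leq \s_2 + 2p \leq \mu(d_r,d_c)$. Your use of $\nmS{A} \leq \sqrt{ \nm A \nm_1 \nm A \nm_\infty }$ is just a packaged form of the paper's hand-rolled eigenvector argument on $\D_+^\top \D_+$, and Weyl's inequality for $\s_2$ is equivalent to the paper's device of subtracting the rank-one top component $\al \oneb_{n_r \times n_c}$ before applying the triangle inequality. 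Two caveats. First, your attempt to deduce \eqref{eq:710} from \eqref{eq:75} via $\th_c \leq \s_2^2 + d_r d_c / n_c$ cannot work: $\s_2^2$ may be as large as $(\sqrt{d_r-1}+\sqrt{d_c-1})^2$, which exceeds $d_c$, so the bound is vacuous for \eqref{eq:710}; the paper does not attempt this reduction and simply treats the hypotheses of Lemma \ref{lemma:72} as standing assumptions (the theorem is phrased as ``the construction in Lemma \ref{lemma:72}''). You should do the same rather than claim an implication that fails. Second, the issue you flag as the ``main obstacle'' --- that a row or column could be reused as a switching target many times, inflating the row/column sums of $\D_\pm$ beyond $p$ --- is real, but the paper resolves it only by asserting the $\leq p$ property ``by construction''; your instinct that the target-selection rule must be made to distribute choices is a fair reading of what Lemma \ref{lemma:72} implicitly provides, not a departure from the paper's argument.
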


The proof of Theorem \ref{thm:71} makes of the following alternate
characterization of a Ramanujan bigraph.
Note that $\nmS{\cdot}$ denotes the spectral norm of a matrix, that is,
its largest singular value.

\begin{lemma}\label{lemma:71}
Suppose $E \in \bi^{n_r \times n_c}$.
Then 
\ben
\item The corresponding graph is $\drc$-biregular if and only if
\be\label{eq:76}
E \oneb_{n_c} = d_r \oneb_{n_r} , \oneb_{n_r}^\top E = d_c \oneb_{n_c}^\top  ,
\ee
where $\oneb_k$ denotes the column vector of $k$ ones.
\item Define the constant
\be\label{eq:77}
\al = \frac{d_r}{n_c} = \frac{d_c}{n_r} = \sqrt{ \frac{ d_r d_c}{n_r n_c} } .
\ee
Suppose that \eqref{eq:76} holds, so that the graph is $\drc$-biregular.
Then the graph has the Ramanujan property if and only if
\be\label{eq:78}
\nmS{ E - \al \oneb_{n_r \times n_c} } \leq 
\sqrt{d_r-1} + \sqrt{d_c-1} =: \mu(d_r,d_c) .
\ee
where
% $\nmS{\cdot}$ denotes the spectral norm (or the largest singular value)
% of a matrix, and
$\oneb_{k \times l}$ denotes the $k \times l$ matrix of all ones.
\een
\end{lemma}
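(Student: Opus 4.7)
The plan is to handle the two parts in sequence. Part (i) is essentially a definitional observation, while part (ii) rests on recognizing $\alpha\, \oneb_{n_r\times n_c}$ as the leading rank-one component of an SVD of $E$.

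For part (i), I would note that the $i$-th component of $E\oneb_{n_c}$ is simply the sum of the $i$-th row of $E$, which by definition of the biadjacency matrix equals the degree of the $i$-th row-vertex. Consequently, $E\oneb_{n_c} = d_r \oneb_{n_r}$ is exactly the statement that every row-vertex has degree $d_r$; the column identity is dual. Biregularity therefore amounts to \eqref{eq:76} by direct inspection.

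For part (ii), the crux is to identify $\alpha\,\oneb_{n_r\times n_c}$ with the leading dyad of an SVD of $E$. Taking the normalized vectors $u_1 := \oneb_{n_r}/\sqrt{n_r}$ and $v_1 := \oneb_{n_c}/\sqrt{n_c}$, an easy computation using \eqref{eq:76} together with the biregularity relation $d_r n_r = d_c n_c$ yields $E v_1 = \sqrt{d_r d_c}\,u_1$ and $E^\top u_1 = \sqrt{d_r d_c}\,v_1$. So $\sqrt{d_r d_c}$ is a singular value of $E$ with singular pair $(u_1,v_1)$, and, as recalled in Section \ref{sec:Review}, this is the \emph{largest} singular value of $E$ in the biregular case. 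A direct substitution using \eqref{eq:77} then rewrites $\alpha\,\oneb_{n_r\times n_c}$ as $\sqrt{d_r d_c}\, u_1 v_1^\top$.

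With this identification in hand, I would conclude by observing that
\begin{equation*}
E - \alpha\,\oneb_{n_r\times n_c} \;=\; (I_{n_r} - u_1 u_1^\top)\, E\,(I_{n_c} - v_1 v_1^\top),
\end{equation*}
which one verifies by expanding the right-hand side and applying the two singular-vector identities above. The spectral norm of this projected operator is precisely $\sigma_2(E)$, by the variational (Courant--Fischer type) characterization of singular values restricted to the orthogonal complements of the leading singular vectors. Hence \eqref{eq:78} is equivalent to $\sigma_2 \leq \mu(d_r,d_c)$, which is the Ramanujan condition \eqref{eq:23}. The only mild subtlety I foresee is that $\sqrt{d_r d_c}$ need not be a simple singular value of $E$; but any choice of $(u_1,v_1)$ satisfying the above eigenvector relations can be extended to a full SVD, so the argument is unaffected, and this is where I would expect to spend a sentence of care rather than genuine effort.
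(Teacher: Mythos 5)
Your proposal is correct and follows essentially the same route as the paper: both identify $(\oneb_{n_r}/\sqrt{n_r},\oneb_{n_c}/\sqrt{n_c})$ as the leading singular pair with value $\sqrt{d_r d_c}$, recognize $\al \oneb_{n_r \times n_c}$ as the corresponding rank-one dyad, and conclude that the residual has spectral norm $\s_2(E)$. Your projection identity $E - \al\oneb_{n_r\times n_c} = (I - u_1 u_1^\top)E(I - v_1 v_1^\top)$ merely makes explicit a step the paper asserts directly via the SVD, and your handling of possible non-simplicity of $\s_1$ is a welcome extra care.
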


\begin{proof}
The first statement is obvious.
To prove the second, observe that due to biregularity, the largest singular
value of $E$ is $\s_1(E) = \sqrt{d_r d_c}$, with associated (normalized)
singular vectors $(1/\sqrt{n_r}) \oneb_{n_r}$ and
$(1/\sqrt{n_c}) \oneb_{n_c}$.
Thus, the singular value decomposition (SVD) of $E$ looks like
\bd
E = \sqrt{ \frac{ d_r d_c}{n_r n_c} } \oneb_{n_r} \oneb_{n_c}^\top + M
= \al \oneb_{n_r \times n_c} + M ,
\ed
where $\nmS{M} = \s_2(E)$, the second largest singular value of $E$.
Now \eqref{eq:78} follows from the definition of a Ramanujan bigraph.
\end{proof}

\begin{proof}
(Of Theorem \ref{thm:71}.)
% Equation \eqref{eq:74} ensures that $E+\D \in \bi^{n_r \times n_c}$, while
% \eqref{eq:73} ensures that
% \bd
% (E+\D) \oneb_{n_c} = d_r \oneb_{n_r} ,
% \oneb_{n_r}^\top (E+\D) = d_c \oneb_{n_c}^\top
% \ed
% Hence $E+\D$ corresponds to a $\drc$-biregular graph.
The perturbed matrix $E_p$ can be written as $E - \D_- + \D_+$.
Note that since $E$ and $E_p$ are both $\drc$-biregular,
the quantity $\al$ defined in \eqref{eq:77} is the same for both
$E$ and $E_p = E - \D_- + \D_+$.
Moreover, by construction, both $\D_-$ and $\D_+$ have no more than $p$
entries of one in each row and each column.
% Now note that the quantity $\al$ defined in \eqref{eq:77} is the same for
% $E$ and $E+\D$.
Therefore, by Lemma \ref{lemma:71}, it follows that $E - \D_- + \D_+$
represents a $\drc$-biregular Ramanujan bigraph if
\bd
\nmS{ (E - \D_- + \D_+) - \al \oneb_{n_r \times n_c} } \leq \mu(d_r,d_c) .
\ed
We already know from Lemma \ref{lemma:71} that
\bd
\nmS{ E - \al \oneb_{n_r \times n_c} } = \s_2(E) .
\ed
Therefore the result follows if it can be established that
\bd
	\nmS{\D_-} , \nmS{\D_+}  \leq p .
\ed
% To establish this bound, write $\D = \D_+ - \D_-$, and
Note that each of
$\D_+$ and $\D_-$ can be viewed as biadjacency matrices of bipartite graphs
of maximum degree $p$. 
Let $\sigma$ be a singular value of $\D_+$ and let 
\bd
{v} = [v_1 \, v_2 \dots v_{n_c}]^\top
\ed
be an eigenvector of $\D_+^\top\D_+$ corresponding to the eigenvalue $\sigma^2$.
That is,
\bd
(\D_+^\top\D_+) {v} = \sigma^2 {v}.
\ed
We assume without loss of generality that 
$$|v_1| = \max_{1 \leq i \leq n_c} |v_i|.$$
We now have,
\bd
\sigma^2 v_1 = R^1(\D_+^\top\D_+)v,
\ed
where $R^i(A)$ denotes the $i$-th row of a matrix $A$.

We also observe that 
\bd
\D_+^\top\D_+ = [a_{ij}]_{1 \leq i,j \leq n_c},
\ed
where $a_{ij}$ equals the dot product $\IP{C_i}{C_j}$
of $C_i$ and $C_j$, the $i$-th and $j$-th columns of $\D_+$ respectively.  
This gives us
\bd
\sigma^2 v_1 = \sum_{j=1}^{n_c} \IP{C_1}{C_j}v_j.
\ed
That is,
\bd
\sigma^2 |v_1| \leq \sum_{j=1}^{n_c} \IP{C_1}{C_j}|v_j|
\leq |v_1| \sum_{j=1}^{n_c}\IP{C_1}{C_j} .
\ed
We observe that each element of the vector $\sum_{j=1}^{n_c}C_j$ is at most $p$.
Also, the vector $C_1$ has at most $p$ number of $1$'s. 
In other words, the dot product 
\bd
\IP{C_1}{\sum_{j=1}^{n_c}C_j} \leq p^2 .
\ed

Thus, $\sigma^2 |v_1| \leq p^2 |v_1|$ and therefore, $\sigma \leq p$ for any singular value $\sigma$ of $\Delta_+$.  We have used here, a technique from the proof of \cite[Theorem 1]{Ram-Murty03}.  A similar analysis works for $\Delta_-$.  Thus, it follows that $ \nmS{\D_+} \leq p , \nmS{\D_-} \leq p$
%}
and therefore,
\bd
\nmS{\D_+ - \D_-} \leq 2p .
\ed
This completes the proof.
\end{proof}

\subsection{Analysis of the Spectra of Ramanujan Graphs and Bigraphs}
\label{ssec:73}

Lemma \ref{lemma:71} shows that the greater the gap between $\s_2(E)$
and the bound $\mu(d_r,d_c)$, the more perturbation the Ramanujan graph
can withstand while still remaining a Ramanujan graph.
At first glance this may appear to be a vacuous result, because
the Alon-Boppana bound for graphs and the Feng-Li bound for bigraphs
shows that, as the graphs grow in size, the gap approaches zero.
Indeed, one of the major contributions of \cite{Lubotzky-et-al88} is to
construct \textit{infinitely many} families of Ramanujan graphs,
with size approaching infinity, but fixed degree.
However, in applications such as the matrix completion problem,
the Alon-Boppana and Feng-Li bounds are not germane.
% However, we now draw attention to a difference in philosophy between
% the theory of Ramanujan graphs and its application to, for example,
% the matrix completion problem.
As shown in \cite[Section 7]{Shantanu-TSP20}, as the size of the
unknown
matrix (which is the size of the graph) grows, so must the degree.
% Hence the Alon-Boppana and the Feng-Li bounds are not meaningful in this
% particular application.

Both our new construction of the Ramanujan bigraphs
and the Gunnells construction of Ramanujan graphs achieve a substantial gap
between the degree $d$ and the second largest singular value $\s_2$.
In the Gunnells construction, we get
\bd
d = \sum_{i=0}^{l-2} q^i \approx q^{l-2} , 
 \mu(d,d) = 
2 \sqrt{d-1} = 2 \sqrt{ \sum_{i=1}^{l-2} q^i } \approx 2 \sqrt{q^{l-2}} ,
\s_2 = \sqrt{q^{l-2}} ,
\ed
because, when $q$ is large, the last term in various summations dominates
the rest.
Therefore
\bd
\mu(d,d) - \s_2 \approx \sqrt{d-1} .
\ed
Similarly, for our construction in Theorem \ref{thm:51}, with $l = q$,
we have that
\bd
d = q , \s_2 = \sqrt{q} , 
 \mu(d,d) = 2 \sqrt{d-1} , \mu(d,d) - \s_2 \approx
\sqrt{d-1} .
\ed
It is also worth mentioning that if a bipartite graph
has girth six or more, then the maximum inner product $\th_c = 1$.
Therefore \eqref{eq:79} and \eqref{eq:710} can be combined into
the single bound
\bd
2p \leq \min \{ n_c - d_r , d_c \} .
\ed
It is clear that the operative bound is the second term $d_c$.
To illustrate these bounds, suppose we construct a Ramanujan graph
of size $n = q^2 = 101^2$ and degree $d = q = 101$.
Then we can avoid \textit{any} set of prohibited edges provided there
are fewer than $50$ edges in each row and each column, and still
preserve $(q,q)$-biregularity.
Moreover, since $\mu(q,q) - \s_2 \geq 10$, we can avoid any arbitrary
set of prohibited edges, provided that there are not more than ten 
such edges in each row and each column.

One noteworthy feature of both our new construction and the Gunnells
construction is the high multiplicity of the second largest singular value.
In our construction, other than the largest singular value and zero (due to
rank deficiency), \textit{all} other singular values have the same value.
In the case of the Gunnells construction, other than the largest singular
value, all other singular values have the same magnitude, though some
could be negative.
This prompted us to analyze whether the LPS and Bibak-et-al.\
constructions have a similar feature.
It appears that, even in the case of the LPS construction,
the second largest singular value is repeated multiple times.
Table \ref{table:1} shows the results for $q = 13$ and $q = 17$, for
various values of $p$.
The upper limit $p = 157$ was chosen because \textit{every} prime
$p \equiv 1 \mod 4$ leads to a simple graph (and not a multigraph),
for both values of $q$.
We have thus far not found any pattern in the multiplicity of $\s_2$
as a function of $p$ and $q$; perhaps this can be a topic for future research.
Another noteworthy aspect is that $\s_2$ is very close to the Ramanujan
bound of $2 \sqrt{d-1}$ in all cases.
In contrast, the Bibak-et-al.\ construction did not show that $\s_2$
had multiplicity more than one, for any $p \equiv 3 \mod 4$, for $p$
up to $103$.
We did not test the Winnie Li construction due to the complexity of
actually computing the constructions.

\begin{table}
\bc
\btab{|r|r|r|r|r||r|r|r|r|r|}
\hline
\multicolumn{5}{|c||}{$q = 13, n = 1092$} &
\multicolumn{5}{|c|}{$q = 17, n = 2448$} \\
\hline
$p$ & $d$ & $2 \sqrt{d-1}$ & $\s_2$ & Mult. &
$p$ & $d$ & $2 \sqrt{d-1}$ & $\s_2$ & Mult. \\
\hline
5 & 6 & 4.4721 & 4.2497 &  36 & 5 & 6 & 4.4721 & 4.3089 &  48 \\
17 &  18 & 8.2462 & 7.8509 &  24 &  13 &  14 & 7.2111 & 7.0902 &  48 \\
29 &  30 &  10.7703 & 9.9323 &  36 &  29 &  30 &  10.7703 &  10.0000 &  51 \\
 37 &  38 &  12.1655 &  11.3081 &  42 &  37 &  38 &  12.1655 &  11.9855 &  16 \\
 41 &  42 &  12.8062 &  11.4940 &  24 &  41 &  42 &  12.8062 &  11.5321 &  48 \\
 53 &  54 &  14.5602 &  12.2462 &  42 &  53 &  54 &  14.5602 &  13.8995 &  36 \\
 61 &  62 &  15.6205 &  13.9758 &  12 &  61 &  62 &  15.6205 &  14.5826 &  32 \\
 73 &  74 &  17.0880 &  15.3693 &  42 &  73 &  74 &  17.0880 &  16.0192 &  48 \\
 89 &  90 &  18.8680 &  17.8215 &  42 &  89 &  90 &  18.8680 &  16.4721 &  51 \\
 97 &  98 &  19.6977 &  17.8078 &  42 &  97 &  98 &  19.6977 &  18.3848 &  54 \\
101 & 102 &  20.0998 &  19.0000 &  28 & 101 & 102 &  20.0998 &  18.9750 &  48 \\
109 & 110 &  20.8806 &  19.0000 &  13 & 109 & 110 &  20.8806 &  20.2648 &  48 \\
113 & 114 &  21.2603 &  20.6504 &  12 & 113 & 114 &  21.2603 &  20.5253 &  16 \\
149 & 150 &  24.4131 &  21.6746 &  24 & 149 & 150 &  24.4131 &  23.6574 &  48 \\
157 & 158 &  25.0599 &  22.4924 &  39 & 157 & 158 &  25.0599 &  23.8885 &  51 \\
\hline
\etab
\ec
\caption{The second largest singular value of the
LPS graph (bipartite or otherwise), and its multiplicity}
\label{table:1}
\end{table}

\section{Conclusions}\label{sec:Conc}
As described in Section \ref{ssec:71}, a significant motivation for the study of explicit constructions of Ramanujan graphs comes from the matrix completion problem.  This theme was explored in detail in \cite{Shantanu-TSP20} where it was shown that it is possible to
guarantee \textit{exact completion} of an unknown low rank matrix,
if the sampling set corresponds to the edge set of a Ramanujan bigraph.
While that set of results is interesting in itself, it has left open the
question of just \textit{how} Ramanujan bigraphs are to be constructed.
In the literature to date, there are relatively few \textit{explicit}
constructions of Ramanujan graphs and balanced bigraphs, and no explicit constructions of
an unbalanced Ramanujan \textit{bigraph}.
In this paper, we presented for the first time an infinite family of
unbalanced Ramanujan bigraphs with explicitly constructed biadjacency matrices.
In addition, we have also shown how to construct the adjacency matrices for
the currently available families of Ramanujan graphs.
These explicit constructions, as well as forthcoming ones based on
\cite{Ballantine-et-al15,Evra-Parz18}, are available for only a few
combinations of degree and size.
In contrast, it is known from \cite{MSS-Annals15a} and
\cite{MSS-FOCS15} that Ramanujan graphs are known to exist for all degrees
and all sizes.
The main limiting factor is that these are only existence proofs and
do not lead to explicit constructions.
A supposedly polynomial-time algorithm for constructing Ramanujan graphs
of all degrees and sizes is proposed in \cite{Cohen16}.
But it is still a conceptual algorithm and no code has been made available.
Therefore it is imperative to develop efficient implementations of the
ideas proposed in \cite{MSS-FOCS15}, and/or to develop other methods to
construct Ramanujan graphs of most degrees and sizes.
We should also note that  the work of \cite{MSS-Annals15a} and
\cite{MSS-FOCS15} shows the existence of \textit{balanced, bipartite} graphs of all degrees and sizes.  Therefore, in Section \ref{sec:Further} of this article, we have also looked at how the constructions of \cite{Lubotzky-et-al88} and Gunnells \cite{Gunnells05} can be further analysed to derive non-bipartite Ramanujan graphs.

It is worth pointing out that efficient solutions
of the matrix completion problem do not really require the existence
of Ramanujan graphs of \textit{all} sizes and degrees.
It is enough if the ``gaps'' in the permissible values for the degrees and
the sizes are very small.
If this extra freedom leads to substantial simplification in the
construction procedures, then it would be a worthwhile tradeoff.
However, research on this problem is still at a nascent stage.

Finally, in Section \ref{sec:Pert}, we address another issue in the matrix completion problem, namely the ``missing measurements" problem.  This leads to the problem of the construction of a Ramanujan bigraph (not necessarily balanced) in which a certain set of edges is prohibited.  In a typical real-life application, the size of this set is small.  In Section \ref{sec:72}, we initiate the study of how a Ramanujan bigraph can be perturbed so as to remove a set of prohibited edges while still retaining biregularity and the Ramanujan condition.  The set of prohibited edges considered is relatively small in size and has some additional constraints. 
Nonetheless, this study is at an initial stage and ``perturbing" a Ramanujan bigraph by the set $\M$ with weaker conditions than we currently require would be an interesting topic for future research.

\section*{Acknowledgements}

The authors thank Professors Cristina Ballantine and Alex Lubotzky for
helpful discussions during the preparation of this article. 

\section*{Declarations}

S.P.B.  was supported through a Senior Research Fellowship from the Ministry of
Human Resource Development, Government of India.  
K.S. was supported by a MATRICS grant from the Science and Engineering
Research Board (SERB), Department of Science and Technology, Government of India.  M.V. was supported by a SERB National Science Chair, Government of India.
The authors have not declared any conflicts of interest.

There is no data associated with this manuscript.

\bibliographystyle{IEEEtran}
\bibliography{Comp-Sens}

\end{document}